\documentclass[twoside]{article}

%
\usepackage[accepted]{aistats2014}
%

\usepackage{amsthm}
\usepackage{amssymb}
\setcounter{tocdepth}{3}
\usepackage{graphicx}
\usepackage{times}
\usepackage{subfigure}
\usepackage{color}
\usepackage{amsbsy}
\usepackage{amsmath}
\usepackage{amsfonts}
\usepackage{epsfig}
\usepackage{wrapfig}
\usepackage{tabulary}

\begin{document}

%

%

\twocolumn[

\aistatstitle{Selective Sampling with Drift}

\aistatsauthor{Edward Moroshko \And Koby Crammer}

\aistatsaddress{ Department of Electrical Engineering,\\ The Technion, Haifa, Israel\\ \texttt{edward.moroshko@gmail.com} \And Department of Electrical Engineering,\\ The Technion, Haifa, Israel\\ \texttt{koby@ee.technion.ac.il}} ]

\begin{abstract}
Recently there has been much work on selective sampling, an online
active learning setting, in which algorithms work in rounds. On each
round an algorithm receives an input and makes a prediction. Then, it
can decide whether to query a label, and if so to update its model,
otherwise the input is discarded. Most of this work is focused on the
stationary case, where it is assumed that there is a fixed target
model, and the performance of the algorithm is compared to a fixed
model. However, in many real-world applications, such as spam
prediction, the best target function may drift over time, or have
shifts from time to time. 
We develop a novel selective sampling
algorithm for the drifting setting, 
analyze it 
under no assumptions on the mechanism generating the sequence
of instances, and derive new mistake bounds that depend on the amount
of drift in the problem. Simulations on synthetic and real-world
datasets demonstrate the superiority of our algorithms as a selective
sampling algorithm in the drifting setting.

\end{abstract}

%

\newtheorem{theorem}{Theorem}
\newtheorem{lemma}[theorem]{Lemma}
\newtheorem{definition}[theorem]{Definition}
\newtheorem{claim}[theorem]{Claim}
\newtheorem{corollary}[theorem]{Corollary}
\newtheorem{remark}[theorem]{Remark}

\def\proofsketch{\par\penalty-1000\vskip .5 pt\noindent{\bf Proof sketch\/: }}
\def\ProofSketch{\par\penalty-1000\vskip .1 pt\noindent{\bf Proof sketch\/: }}
\newcommand{\QED}{\hfill$\;\;\;\rule[0.1mm]{2mm}{2mm}$\\}
\renewcommand{\qedsymbol}{}

\newcommand{\todo}[1]{{~\\\bf TODO: {#1}}~\\}

\newfont{\msym}{msbm10}
\newcommand{\reals}{\mathbb{R}}
\newcommand{\half}{\frac{1}{2}}
\newcommand{\sign}{{\rm sign}}
\newcommand{\paren}[1]{\left({#1}\right)}
\newcommand{\brackets}[1]{\left[{#1}\right]}
\newcommand{\braces}[1]{\left\{{#1}\right\}}
\newcommand{\ceiling}[1]{\left\lceil{#1}\right\rceil}
\newcommand{\abs}[1]{\left\vert{#1}\right\vert}
\newcommand{\tr}{{\rm Tr}}
\newcommand{\pr}[1]{{\rm Pr}\left[{#1}\right]}
\newcommand{\prp}[2]{{\rm Pr}_{#1}\left[{#2}\right]}
\newcommand{\Exp}[1]{{\rm E}\left[{#1}\right]}
\newcommand{\Expp}[2]{{\rm E}_{#1}\left[{#2}\right]}
\newcommand{\eqdef}{\stackrel{\rm def}{=}}
\newcommand{\comdots}{, \ldots ,}
\newcommand{\true}{\texttt{True}}
\newcommand{\false}{\texttt{False}}
\newcommand{\mcal}[1]{{\mathcal{#1}}}
\newcommand{\argmin}[1]{\underset{#1}{\mathrm{argmin}} \:}
\newcommand{\normt}[1]{\left\Vert {#1} \right\Vert^2}
\newcommand{\step}[1]{\left[#1\right]_+}
\newcommand{\1}[1]{[\![{#1}]\!]}
\newcommand{\diag}{{\textrm{diag}}}
\newcommand{\KL}{{\textrm{D}_{\textrm{KL}}}}
\newcommand{\IS}{{\textrm{D}_{\textrm{IS}}}}
\newcommand{\EU}{{\textrm{D}_{\textrm{EU}}}}

\newcommand{\leftmarginpar}[1]{\marginpar[#1]{}}
\newcommand{\figline}{\rule{0.45\textwidth}{0.5pt}}
\newcommand{\figlinee}[1]{\rule{#1\textwidth}{0.5pt}}
\newcommand{\pseudocodefont}{\normalsize}
\newcommand{\nolineskips}{
\setlength{\parskip}{0pt}
\setlength{\parsep}{0pt}
\setlength{\topsep}{0pt}
\setlength{\partopsep}{0pt}
\setlength{\itemsep}{0pt}}

\newcommand{\beq}[1]{\begin{equation}\label{#1}}
\newcommand{\eeq}{\end{equation}}
\newcommand{\beqa}{\begin{eqnarray}}
\newcommand{\eeqa}{\end{eqnarray}}
\newcommand{\secref}[1]{Section~\ref{#1}}
\newcommand{\figref}[1]{Fig.~\ref{#1}}
\newcommand{\exmref}[1]{Example~\ref{#1}}
\newcommand{\thmref}[1]{Thm.~\ref{#1}}
\newcommand{\sthmref}[1]{Thm.~\ref{#1}}
\newcommand{\defref}[1]{Definition~\ref{#1}}
\newcommand{\remref}[1]{Remark~\ref{#1}}
\newcommand{\chapref}[1]{Chapter~\ref{#1}}
\newcommand{\appref}[1]{App.~\ref{#1}}
\newcommand{\lemref}[1]{Lem.~\ref{#1}}
\newcommand{\propref}[1]{Proposition~\ref{#1}}
\newcommand{\claimref}[1]{Claim~\ref{#1}}

\newcommand{\corref}[1]{Corollary~\ref{#1}}
\newcommand{\scorref}[1]{Cor.~\ref{#1}}
\newcommand{\tabref}[1]{Table~\ref{#1}}
\newcommand{\tran}[1]{{#1}^{\top}}
\newcommand{\norm}{\mcal{N}}
\newcommand{\eqsref}[1]{Eqns.~(\ref{#1})}

\newcommand{\mb}[1]{{\boldsymbol{#1}}}
\newcommand{\up}[2]{{#1}^{#2}}
\newcommand{\dn}[2]{{#1}_{#2}}
\newcommand{\du}[3]{{#1}_{#2}^{#3}}
\newcommand{\textl}[2]{{$\textrm{#1}_{\textrm{#2}}$}}

\newcommand{\lf}{\lambda_{F}}

\newcommand{\vx}{\mathbf{x}}
\newcommand{\vxi}[1]{\vx_{#1}}
\newcommand{\vxii}{\vxi{t}}

\newcommand{\yi}[1]{y_{#1}}
\newcommand{\yii}{\yi{t}}
\newcommand{\hyi}[1]{\hat{y}_{#1}}
\newcommand{\hyii}{\hyi{i}}

\newcommand{\vy}{\mb{y}}
\newcommand{\vyi}[1]{\vy_{#1}}
\newcommand{\vyii}{\vyi{i}}

\newcommand{\vn}{\mb{\nu}}
\newcommand{\vni}[1]{\vn_{#1}}
\newcommand{\vnii}{\vni{i}}

\newcommand{\vmu}{\mb{\mu}}
\newcommand{\vmus}{{\vmu^*}}
\newcommand{\vmuts}{{\vmus}^{\top}}
\newcommand{\vmui}[1]{\vmu_{#1}}
\newcommand{\vmuii}{\vmui{i}}

\newcommand{\vmut}{\vmu^{\top}}
\newcommand{\vmuti}[1]{\vmut_{#1}}
\newcommand{\vmutii}{\vmuti{i}}

\newcommand{\vsigma}{\mb \sigma}
\newcommand{\msigma}{\Sigma}
\newcommand{\msigmas}{{\msigma^*}}
\newcommand{\msigmai}[1]{\msigma_{#1}}
\newcommand{\msigmaii}{\msigmai{t}}

\newcommand{\mups}{\Upsilon}
\newcommand{\mupss}{{\mups^*}}
\newcommand{\mupsi}[1]{\mups_{#1}}
\newcommand{\mupsii}{\mupsi{i}}
\newcommand{\upssl}{\upsilon^*_l}

\newcommand{\vu}{\mathbf{u}}
\newcommand{\vut}{\tran{\vu}}
\newcommand{\vui}[1]{\vu_{#1}}
\newcommand{\vuti}[1]{\vut_{#1}}
\newcommand{\hvu}{\hat{\vu}}
\newcommand{\hvut}{\tran{\hvu}}
\newcommand{\hvur}[1]{\hvu_{#1}}
\newcommand{\hvutr}[1]{\hvut_{#1}}
\newcommand{\vw}{\mb{w}}
\newcommand{\vwi}[1]{\vw_{#1}}
\newcommand{\vwii}{\vwi{t}}
\newcommand{\vwti}[1]{\vwt_{#1}}
\newcommand{\vwt}{\tran{\vw}}

\newcommand{\tvw}{\tilde{\mb{w}}}
\newcommand{\tvwi}[1]{\tvw_{#1}}
\newcommand{\tvwii}{\tvwi{t}}

\newcommand{\vv}{\mb{v}}
\newcommand{\vvt}{\tran{\vv}}

\newcommand{\vvi}[1]{\vv_{#1}}
\newcommand{\vvti}[1]{\vvt_{#1}}
\newcommand{\lambdai}[1]{\lambda_{#1}}
\newcommand{\Lambdai}[1]{\Lambda_{#1}}

\newcommand{\vxt}{\tran{\vx}}
\newcommand{\hvx}{\hat{\vx}}
\newcommand{\hvxi}[1]{\hvx_{#1}}
\newcommand{\hvxii}{\hvxi{i}}
\newcommand{\hvxt}{\tran{\hvx}}
\newcommand{\hvxti}[1]{\hvxt_{#1}}
\newcommand{\hvxtii}{\hvxti{i}}
\newcommand{\vxti}[1]{\vxt_{#1}}
\newcommand{\vxtii}{\vxti{i}}

\newcommand{\vb}{\mb{b}}
\newcommand{\vbt}{\tran{\vb}}
\newcommand{\vbi}[1]{\vb_{#1}}

\newcommand{\hvy}{\hat{\vy}}
\newcommand{\hvyi}[1]{\hvy_{#1}}


\renewcommand{\mp}{P}
\newcommand{\mpd}{\mp^{(d)}}
\newcommand{\mpt}{\mp^T}
\newcommand{\tmp}{\tilde{\mp}}
\newcommand{\mpi}[1]{\mp_{#1}}
\newcommand{\mpti}[1]{\mpt_{#1}}
\newcommand{\mptii}{\mpti{i}}
\newcommand{\mpii}{\mpi{i}}
\newcommand{\mps}{Q}
\newcommand{\mpsi}[1]{\mps_{#1}}
\newcommand{\mpsii}{\mpsi{i}}
\newcommand{\tmpt}{\tmp^T}
\newcommand{\mz}{Z}
\newcommand{\mv}{V}
\newcommand{\mvi}[1]{\mv_{#1}}
\newcommand{\mvt}{V^T}
\newcommand{\mvti}[1]{\mvt_{#1}}
\newcommand{\mzt}{\mz^T}
\newcommand{\tmz}{\tilde{\mz}}
\newcommand{\tmzt}{\tmz^T}
\newcommand{\mx}{\mathbf{X}}
\newcommand{\ma}{\mathbf{A}}
\newcommand{\Ft}{\mathbf{F}_{t}}
\newcommand{\invFt}{\mathbf{F}_{t}^{-1}}
\newcommand{\FtT}{\mathbf{F}_{t}^{\top}}
\newcommand{\invFtT}{(\FtT)^{-1}}
\newcommand{\mxs}[1]{\mx_{#1}}

\newcommand{\mai}[1]{\ma_{#1}}
\newcommand{\mat}{\tran{\ma}}
\newcommand{\mati}[1]{\mat_{#1}}

\newcommand{\mc}{{C}}
\newcommand{\mci}[1]{\mc_{#1}}
\newcommand{\mcti}[1]{\mct_{#1}}

\newcommand{\md}{{\mathbf{D}}}
\newcommand{\mdi}[1]{\md_{#1}}

\newcommand{\mxi}[1]{\textrm{diag}^2\paren{\vxi{#1}}}
\newcommand{\mxii}{\mxi{i}}

\newcommand{\hmx}{\hat{\mx}}
\newcommand{\hmxi}[1]{\hmx_{#1}}
\newcommand{\hmxii}{\hmxi{i}}
\newcommand{\hmxt}{\hmx^T}
\newcommand{\mxt}{\mx^\top}
\newcommand{\mi}{\mathbf{I}}
\newcommand{\mq}{Q}
\newcommand{\mqt}{\mq^T}
\newcommand{\mlam}{\Lambda}

\renewcommand{\L}{\mcal{L}}
\newcommand{\R}{\mcal{R}}
\newcommand{\X}{\mcal{X}}
\newcommand{\Y}{\mcal{Y}}
\newcommand{\F}{\mcal{F}}
\newcommand{\nur}[1]{\nu_{#1}}
\newcommand{\lambdar}[1]{\lambda_{#1}}
\newcommand{\gammai}[1]{\gamma_{#1}}
\newcommand{\gammaii}{\gammai{i}}
\newcommand{\alphai}[1]{\alpha_{#1}}
\newcommand{\alphaii}{\alphai{i}}
\newcommand{\lossp}[1]{\ell_{#1}}
\newcommand{\eps}{\epsilon}
\newcommand{\epss}{\eps^*}
\newcommand{\lsep}{\lossp{\eps}}
\newcommand{\lseps}{\lossp{\epss}}
\newcommand{\T}{\mcal{T}}

\newcommand{\kc}[1]{\begin{center}\fbox{\parbox{3in}{{\textcolor{green}{KC: #1}}}}\end{center}}
\newcommand{\edward}[1]{\begin{center}\fbox{\parbox{3in}{{\textcolor{red}{EM: #1}}}}\end{center}}
\newcommand{\nv}[1]{\begin{center}\fbox{\parbox{3in}{{\textcolor{blue}{NV: #1}}}}\end{center}}

\newcommand{\newstuffa}[2]{#2}
\newcommand{\newstufffroma}[1]{}
\newcommand{\newstufftoa}{}

\newcommand{\newstuff}[2]{#2}
\newcommand{\newstufffrom}[1]{}
\newcommand{\newstuffto}{}
\newcommand{\oldnote}[2]{}

\newcommand{\commentout}[1]{}
\newcommand{\mypar}[1]{\medskip\noindent{\bf #1}}

\newcommand{\inner}[2]{\left< {#1} , {#2} \right>}
\newcommand{\kernel}[2]{K\left({#1},{#2} \right)}
\newcommand{\tprr}{\tilde{p}_{rr}}
\newcommand{\hxr}{\hat{x}_{r}}
\newcommand{\projalg}{{PST }}
\newcommand{\projealg}[1]{$\textrm{PST}_{#1}~$}
\newcommand{\gradalg}{{GST }}

\newcounter {mySubCounter}
\newcommand {\twocoleqn}[4]{
  \setcounter {mySubCounter}{0} %
  \let\OldTheEquation \theequation %
  \renewcommand {\theequation }{\OldTheEquation \alph {mySubCounter}}%
  \noindent \hfill%
  \begin{minipage}{.40\textwidth}
\vspace{-0.6cm}
    \begin{equation}\refstepcounter{mySubCounter}
      #1
    \end {equation}
  \end {minipage}
~~~~~~
  \addtocounter {equation}{ -1}%
  \begin{minipage}{.40\textwidth}
\vspace{-0.6cm}
    \begin{equation}\refstepcounter{mySubCounter}
      #3
    \end{equation}
  \end{minipage}%
  \let\theequation\OldTheEquation
}

\newcommand{\vzero}{\mb{0}}

\newcommand{\smargin}{\mcal{M}}

\newcommand{\ai}[1]{A_{#1}}
\newcommand{\bi}[1]{B_{#1}}
\newcommand{\aii}{\ai{i}}
\newcommand{\bii}{\bi{i}}
\newcommand{\betai}[1]{\beta_{#1}}
\newcommand{\betaii}{\betai{i}}
\newcommand{\mar}{M}
\newcommand{\mari}[1]{\mar_{#1}}
\newcommand{\marii}{\mari{i}}
\newcommand{\nmari}[1]{m_{#1}}
\newcommand{\nmarii}{\nmari{i}}

\newcommand{\erf}{\Phi}

\newcommand{\var}{V}
\newcommand{\vari}[1]{\var_{#1}}
\newcommand{\varii}{\vari{i}}

\newcommand{\varb}{v}
\newcommand{\varbi}[1]{\varb_{#1}}
\newcommand{\varbii}{\varbi{i}}

\newcommand{\vara}{u}
\newcommand{\varai}[1]{\vara_{#1}}
\newcommand{\varaii}{\varai{i}}

\newcommand{\marb}{m}
\newcommand{\marbi}[1]{\marb_{#1}}
\newcommand{\marbii}{\marbi{i}}

\newcommand{\algname}{{AROW}}
\newcommand{\rlsname}{{RLS}}
\newcommand{\mrlsname}{{MRLS}}

\newcommand{\phia}{\psi}
\newcommand{\phib}{\xi}

\newcommand{\amsigmaii}{\tilde{\msigma}_t}
\newcommand{\amsigmai}[1]{\tilde{\msigma}_{#1}}
\newcommand{\avmuii}{\tilde{\vmu}_i}
\newcommand{\avmui}[1]{\tilde{\vmu}_{#1}}
\newcommand{\amarbii}{\tilde{\marb}_i}
\newcommand{\avarbii}{\tilde{\varb}_i}
\newcommand{\avaraii}{\tilde{\vara}_i}
\newcommand{\aalphaii}{\tilde{\alpha}_i}

\newcommand{\svar}{v}
\newcommand{\smar}{m}
\newcommand{\nsmar}{\bar{m}}

\newcommand{\vnu}{\mb{\nu}}
\newcommand{\vnut}{\vnu^\top}
\newcommand{\vz}{\mb{z}}
\newcommand{\vZ}{\mb{Z}}
\newcommand{\fphi}{f_{\phi}}
\newcommand{\gphi}{g_{\phi}}


\newcommand{\vtmui}[1]{\tilde{\vmu}_{#1}}
\newcommand{\vtmuii}{\vtmui{i}}

\newcommand{\zetai}[1]{\zeta_{#1}}
\newcommand{\zetaii}{\zetai{i}}


\newcommand{\vstate}{\bf{s}}
\newcommand{\vstatet}[1]{\vstate_{#1}}
\newcommand{\vstatett}{\vstatet{t}}

\newcommand{\mtran}{\bf{\Phi}}
\newcommand{\mtrant}[1]{\mtran_{#1}}
\newcommand{\mtrantt}{\mtrant{t}}

\newcommand{\vstatenoise}{\bf{\eta}}
\newcommand{\vstatenoiset}[1]{\vstatenoise_{#1}}
\newcommand{\vstatenoisett}{\vstatenoiset{t}}

\newcommand{\vobser}{\bf{o}}
\newcommand{\vobsert}[1]{\vobser_{#1}}
\newcommand{\vobsertt}{\vobsert{t}}

\newcommand{\mobser}{\bf{H}}
\newcommand{\mobsert}[1]{\mobser_{#1}}
\newcommand{\mobsertt}{\mobsert{t}}

\newcommand{\vobsernoise}{\bf{\nu}}
\newcommand{\vobsernoiset}[1]{\vobsernoise_{#1}}
\newcommand{\vobsernoisett}{\vobsernoiset{t}}

\newcommand{\mstatenoisecov}{\bf{Q}}
\newcommand{\mstatenoisecovt}[1]{\mstatenoisecov_{#1}}
\newcommand{\mstatenoisecovtt}{\mstatenoisecovt{t}}

\newcommand{\mobsernoisecov}{\bf{R}}
\newcommand{\mobsernoisecovt}[1]{\mobsernoisecov_{#1}}
\newcommand{\mobsernoisecovtt}{\mobsernoisecovt{t}}

\newcommand{\vestate}{\bf{\hat{s}}}
\newcommand{\vestatet}[1]{\vestate_{#1}}
\newcommand{\vestatett}{\vestatet{t}}
\newcommand{\vestatept}[1]{\vestatet{#1}^+}
\newcommand{\vestatent}[1]{\vestatet{#1}^-}

\newcommand{\mcovar}{\bf{P}}
\newcommand{\mcovart}[1]{\mcovar_{#1}}
\newcommand{\mcovarpt}[1]{\mcovart{#1}^+}
\newcommand{\mcovarnt}[1]{\mcovart{#1}^-}

\newcommand{\mkalmangain}{\bf{K}}
\newcommand{\mkalmangaint}[1]{\mkalmangain_{#1}}

\newcommand{\vkalmangain}{\bf{\kappa}}
\newcommand{\vkalmangaint}[1]{\vkalmangain_{#1}}

\newcommand{\obsernoise}{{\nu}}
\newcommand{\obsernoiset}[1]{\obsernoise_{#1}}
\newcommand{\obsernoisett}{\obsernoiset{t}}

\newcommand{\obsernoisecov}{r}
\newcommand{\obsernoisecovt}[1]{\obsernoisecov_{#1}}
\newcommand{\obsernoisecovtt}{\obsernoisecov}

\newcommand{\obsnscv}{s}
\newcommand{\obsnscvt}[1]{\obsnscv_{#1}}
\newcommand{\obsnscvtt}{\obsnscvt{t}}

\newcommand{\Psit}[1]{\Psi_{#1}}
\newcommand{\Psitt}{\Psit{t}}

\newcommand{\Omegat}[1]{\Omega_{#1}}
\newcommand{\Omegatt}{\Omegat{t}}

\newcommand{\ellt}[1]{\ell_{#1}}
\newcommand{\gllt}[1]{g_{#1}}

\newcommand{\chit}[1]{\chi_{#1}}

\newcommand{\ms}{\mathcal{M}}
\newcommand{\us}{\mathcal{U}}
\newcommand{\as}{\mathcal{A}}

\newcommand{\mn}{M}
\newcommand{\un}{U}

\newcommand{\seti}[1]{S_{#1}}

\newcommand{\obj}{\mcal{C}}

\newcommand{\dta}[3]{d_{#3}\paren{#1,#2}}

\newcommand{\coa}{a}
\newcommand{\coc}{c}
\newcommand{\cob}{b}
\newcommand{\cor}{r}
\newcommand{\conu}{\nu}

\newcommand{\coat}[1]{\coa_{#1}}
\newcommand{\coct}[1]{\coc_{#1}}
\newcommand{\cobt}[1]{\cob_{#1}}
\newcommand{\cort}[1]{\cor_{#1}}
\newcommand{\conut}[1]{\conu_{#1}}

\newcommand{\coatt}{\coat{t}}
\newcommand{\coctt}{\coct{t}}
\newcommand{\cobtt}{\cobt{t}}
\newcommand{\cortt}{\cort{t}}
\newcommand{\conutt}{\conut{t}}

\newcommand{\rb}{R_B}
\newcommand{\proj}{\textrm{proj}}

\section{Introduction}
 We consider the online binary classification task, in which a learning algorithm predicts a binary label given inputs in a sequence of rounds. An example of such task is classification of emails based on their content as spam or not spam. Traditionally, the purpose of a learning algorithm is to make the number of mistakes as small as possible compared to predictions of some \emph{single} function from some class. We call this setting the \emph{stationary} setting.

Following the pioneering work of Rosenblatt \cite{Rosenblatt58} many algorithms were proposed for this setting. Some of them are able to employ second-order information. For example, the second-order perceptron algorithm \cite{CesaBianchiCoGe05} extends the original perceptron algorithm and uses the spectral properties of the data to improve performance. Another example is the AROW algorithm \cite{CrammerKuDr09} which uses confidence as a second-order information. All these second-order algorithms can be seen as RLS (Regularized Least Squares) based, as their update equations are similar to those of RLS, updating a weight vector and a covariance-like matrix. Under the stationary setting, RLS-based second-order algorithms have been successfully applied to the regression and classification tasks, as shown in \tabref{table:rls_algorithms}.

Despite the extensive and impressive guarantees that can be made for algorithms
in such setting \cite{CesaBianchiCoGe05,CrammerKuDr09}, competing with the best \emph{fixed} function is not always good
enough. In many real-world applications, the true target function is not fixed, but is
slowly changing over time, or switching from time to time. These reasons led to the development of algorithms and accompanying analysis
for drifting and shifting settings, which we collectively call the \emph{non-stationary} setting. For online regression, few algorithms were developed for this setting \cite{HerbsterW01,VaitsCr11,MoroshkoCr13}. Yet, for online classification, the Shifting Perceptron algorithm \cite{CavallantiCG07} is a first-order algorithm that shrinks the weight vector each iteration, and in this way weaken dependence on the past. The Modified Perceptron algorithm \cite{DBLP:journals/algorithmica/BlumFKV98} is another first-order algorithm that had been shown to work well in the drifting setting \cite{DBLP:conf/colt/CrammerMEV10}. In this paper we derive a new RLS-based \emph{second-order} algorithm for classification, designed to work with target drift, and thus we fill the missing configuration in \tabref{table:rls_algorithms}. Our algorithm extends the second-order perceptron algorithm \cite{CesaBianchiCoGe05}, and we provide a performance bound in the mistake bound model.

A practical variant of the fully supervised online classification setting, 
is where, at each prediction step, the learner can abstain from observing the current label. This setting is called selective sampling \cite{FreundSeShTi97}. In this setting a learning algorithm actively decides when to query for a label. If the label is queried, then the label value can be used to improve future predictions, and otherwise the algorithm never knows whether his prediction was correct. Roughly speaking, selective sampling algorithms can be divided in two groups. In the first group, a simple randomized rule is used to turn fully supervised algorithm to selective sampling algorithm. The rule uses the margin of the estimate. This group includes the selective sampling versions of the perceptron and the second-order perceptron algorithms \cite{Cesa-BianchiGZ06a}. In the second group, selective sampling algorithms are derived based on comparing the variance of the RLS estimate to some threshold. This group includes the BBQ algorithm \cite{DBLP:conf/icml/Cesa-BianchiGO09}, where the threshold decays polynomially with $t$ as $t^{-\kappa}$, and more involved variants where the threshold depends on the margin of the RLS estimate \cite{DBLP:conf/colt/DekelGS10, DBLP:conf/icml/OrabonaC11}.

In all previous work on selective sampling the performance of an algorithm is compared to the performance of a {\em single} linear comparator. To the best of our knowledge, our work is the first instance of learning online in the context of drifting in the selective sampling setting. We build on the work of Cesa-Bianchi et al~\cite{Cesa-BianchiGZ06a} that combined a randomized rule into the Perceptron algorithm, yielding a selective sampling algorithm. We analyze the resulting algorithm in the drifting setting, and derive a bound on the expected number of mistakes of the algorithm. Thus, we fill the non-stationary cell in \tabref{table:rand_ss_algorithms}. Simulations on synthetic and real-world datasets show the advantages of our algorithm, in a fully supervised and selective sampling settings.
\begin{table}[t]
\center
\begin{tabulary}{1\textwidth}{|C|C|C|}
\hline
   & \textbf{Stationary} & \textbf{Non-stationary} \\
\hline
  \textbf{Regression} & \cite{Vovk01,AzouryWa01,Forster}  & \cite{MoroshkoCr13,VaitsCr11} \\
\hline
  \textbf{Classification} & \cite{CesaBianchiCoGe05,CrammerKuDr09} & This work \\
\hline
\end{tabulary}
\caption{Fully supervised online RLS-based second-order algorithms.}
\vspace{-0.4cm}
\label{table:rls_algorithms}
\end{table}

\vspace{-0.25cm}
\section{Problem setting}
\vspace{-0.25cm}
We consider the standard online learning model \cite{DBLP:journals/ml/Angluin87, DBLP:journals/ml/Littlestone87} for binary classification, in which learning proceeds in a sequence of rounds $t=1,2,\ldots ,T$. In 
round $t$ the
algorithm observes an instance $x_t\in\reals^d$ and outputs a prediction $\hat{y}_{t}\in\{-1,+1\}$ for the label $y_t$ associated with
$x_t$. We say that the algorithm has made a prediction mistake if $\hat{y}_{t} \neq y_{t}$, and denote by $M_t$ the indicator function of the event $\hat{y}_{t} \neq y_{t}$. After observing the correct label $y_t$ the algorithm may update its prediction rule, and then proceeds to the next round. We denote by $m$ the total number of mistakes over a sequence of $T$ examples.

The performance of an algorithm is measured by the total number
of mistakes it makes on an arbitrary sequence of examples. In the standard
performance model, the goal is to bound this total number of mistakes in terms of
the performance of the best \emph{fixed} linear classifier $u\in\reals^d$ in hindsight. Since finding $u\in\reals^d$ that minimizes the number of mistakes on a known sequence is a computationally hard problem, the performance of the best predictor in hindsight is often measured using the cumulative hinge loss $L_{\gamma,T}\left(u\right)=\sum_{t=1}^{T}\ell_{\gamma,t}\left(u\right)$, where $\ell_{\gamma,t}\left(u\right)=\max\left\{ 0,\gamma-y_{t}u^{\top}x_{t}\right\}$ is the hinge loss of the competitor $u$ on round $t$ for some margin threshold $\gamma>0$.

In the drifting setting that we consider in this work, the learning algorithm faces the harder goal of bounding its total number of mistakes in terms of the cumulative hinge loss achieved by an arbitrary sequence $u_1,u_2,\ldots,u_T \in\reals^d $ of comparison vectors. The cumulative hinge loss of such sequence is $L_{\gamma,T}\left( \{u_t\}\right)=\sum_{t=1}^{T}\ell_{\gamma,t}\left(u_t\right)$.
 To make this goal feasible, the bound is allowed to scale also with the norm of $u_1$ and the total amount of drift defined to be
$V = V(\{u_t\}) = \sum_{t=2}^{T}\left\Vert u_{t}-u_{t-1}\right\Vert ^{2}$.

We consider two settings: (a) standard supervised online binary
classification (described above), and (b) selective sampling. In the
later setting, after each prediction the learner may observe the
correct label $y_t$ only by issuing a query. If no query is issued at
time $t$, then $y_t$ remains unknown. We represent the algorithm's
decision of querying the label at time $t$ through the value of a
Bernoulli random variable $Z_t$, and the event of a mistake with the
indicator variable $M_t=1$. Note that we measure the performance of
the algorithm by the total number of mistakes it makes on a sequence
of examples, including the rounds where the true label $y_t$ remains
unknown.

Finally, $\bar{L}_{\gamma,T}\left(\{u_{t}\}\right)=\mathbb{E}\left[\sum_{t=1}^{T}M_{t}Z_{t}\ell_{\gamma,t}\left(u_{t}\right)\right]$
 is the expected total hinge loss of a competitor on mistaken and queried rounds, and trivially $\bar{L}_{\gamma,T}\left(\{u_{t}\}\right) \leq L_{\gamma,T}\left(\{u_t\}\right)$.
\begin{table}[t]
\center
\begin{tabulary}{1\textwidth}{|C|C|}
\hline
   \textbf{Stationary} & \textbf{Non-stationary} \\
\hline
   \cite{Cesa-BianchiGZ06a} & This work \\
\hline
\end{tabulary}
\caption{Second-order randomized {\em selective sampling} algorithms for classification.}
\vspace{-0.3cm}
\label{table:rand_ss_algorithms}
\end{table}

\vspace{-0.25cm}
\section{Algorithms}
\vspace{-0.25cm}
Online algorithms work in rounds. On round $t$ the algorithm receives
an input $x_t$ and makes a prediction. We follow Moroshko and
Crammer~\cite{MoroshkoCr13} and design the prediction as a
last-step min-max problem in the context of drifting. Yet, unlike all
previous work, we design algorithms for
classification. Specifically, prediction is the solution of the
following optimization problem,
\begin{align}
\hat{y}_{T}=\arg\min_{\hat{y}_{T}\in\{-1,+1\}}\max_{y_{T}\in\{-1,+1\} }\Bigg[\sum_{t=1}^{T}\left(y_{t}-\hat{y}_{t}\right)^{2} \nonumber
\\-\min_{u_{1},\ldots,u_{T}}Q_{T}\left(u_{1},\ldots,u_{T}\right)\Bigg]~,\label{minmax_1}
\end{align}
where
\begin{align*}
Q_{t}\left(u_{1},\ldots,u_{t}\right) = & b\left\Vert u_{1}\right\Vert ^{2}+c\sum_{s=1}^{t-1}\left\Vert u_{s+1}-u_{s}\right\Vert ^{2}\\
& +\sum_{s=1}^{t}\left(y_{s}-u_{s}^{\top}x_{s}\right)^{2} 
\end{align*}
 for some positive constants $b,c$\footnote{We still use the squared
  loss in \eqref{minmax_1}, as done for least-squares
  SVMs~\cite{DBLP:journals/npl/SuykensV99,suykens2002least}, which allows us to compute all quantities analytically.}.

This optimization problem can also be seen as a game where the
algorithm chooses a prediction label $\hat{y}_t \in\{-1,+1\}$ to minimize the last-step
regret, while an adversary chooses a target label ${y}_t \in\{-1,+1\}$ to
maximize it.
The first term of \eqref{minmax_1}
is the loss suffered by the algorithm while
$Q_{t}\left(u_{1},\ldots,u_{t}\right)$ 
 is a sum of the loss suffered by some sequence of linear
functions $\{u_s\}_{s=1}^t$, a
penalty for consecutive pairs that are far from each other, and for the
norm of the first to be far from zero.

The following lemma enables to solve \eqref{minmax_1} by specifying
means to solve the inner optimization problem in \eqref{minmax_1}.
\begin{lemma}[{\cite{MoroshkoCr13}, Lemma 2}]
\label{lem:lemma12}
Denote \\
$P_{t}\left(u_{t}\right)=\min_{u_{1},\ldots,u_{t-1}}Q_{t}\left(u_{1},\ldots,u_{t}\right)$.
Then\\
 $P_{t}\left(u_{t}\right)=u_{t}^{\top}D_{t}u_{t}-2u_{t}^{\top}e_{t}+f_{t}$
where,
\begin{align}
&D_{1} \!=bI+x_{1}x_{1}^{\top}
&&
D_{t}=\left(D_{t-1}^{-1}+c^{-1}I\right)^{-1}+x_{t}x_{t}^{\top} \label{D}\\
&e_{1}\!=y_{1}x_{1} &&
e_{t}=\left(\!I\!+\!c^{-1}D_{t-1}\!\right)^{-1}\!e_{t-1}\!+\!y_{t}x_{t} \label{e}\\
&f_{1}\!=y_{1}^{2}&&
f_{t}\!=\!f_{t-1}-e_{t-1}^{\top}\left(\!cI\!+\!D_{t-1}\!\right)^{-1}\!\!e_{t-1}\!+\!y_{t}^{2}\nonumber
,
\end{align}
where, $D_{t}\in\mathbb{R}^{d\times d}$ is a PSD matrix, 
$e_{t}\in\mathbb{R}^{d\times1}$ and $f_{t}\in\mathbb{R}$.
\end{lemma}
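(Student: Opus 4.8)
The plan is to prove the lemma by induction on $t$, peeling off one comparison vector at a time from the inner minimization. Writing $P_t(u_t)=\min_{u_1,\ldots,u_{t-1}}Q_t$, the base case $t=1$ is immediate: there is nothing to minimize, so $P_1(u_1)=Q_1(u_1)=b\|u_1\|^2+(y_1-u_1^\top x_1)^2$, and expanding the square gives exactly $u_1^\top(bI+x_1x_1^\top)u_1-2u_1^\top(y_1x_1)+y_1^2$, which matches $D_1,e_1,f_1$.

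For the inductive step, the key observation is the additive decomposition $Q_t(u_1,\ldots,u_t)=Q_{t-1}(u_1,\ldots,u_{t-1})+c\|u_t-u_{t-1}\|^2+(y_t-u_t^\top x_t)^2$. Since only the first two summands involve $u_1,\ldots,u_{t-1}$, and among those only $c\|u_t-u_{t-1}\|^2$ couples $u_{t-1}$ to $u_t$, I can push the minimization inward to obtain
\[
P_t(u_t)=(y_t-u_t^\top x_t)^2+\min_{u_{t-1}}\Big[P_{t-1}(u_{t-1})+c\|u_t-u_{t-1}\|^2\Big].
\]
Substituting the inductive form $P_{t-1}(u_{t-1})=u_{t-1}^\top D_{t-1}u_{t-1}-2u_{t-1}^\top e_{t-1}+f_{t-1}$ turns the bracket into a strictly convex quadratic in $u_{t-1}$ (strict because $D_{t-1}$ is positive definite, verified below), whose unconstrained minimizer is $u_{t-1}^\star=(D_{t-1}+cI)^{-1}(e_{t-1}+c\,u_t)$.

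Plugging $u_{t-1}^\star$ back in and using the identity $\min_u(u^\top A u-2u^\top v)=-v^\top A^{-1}v$ for positive definite $A$ (here $A=D_{t-1}+cI$, $v=e_{t-1}+c\,u_t$), I expand and collect the quadratic, linear, and constant terms in $u_t$. Writing $M=(D_{t-1}+cI)^{-1}$, this yields $D_t=x_tx_t^\top+cI-c^2M$, $e_t=y_tx_t+cMe_{t-1}$, and $f_t=f_{t-1}-e_{t-1}^\top Me_{t-1}+y_t^2$. The last already matches the claimed recursion, and the $e_t$ recursion matches after rewriting $c(D_{t-1}+cI)^{-1}=(I+c^{-1}D_{t-1})^{-1}$, which follows by factoring $D_{t-1}+cI=c(I+c^{-1}D_{t-1})$.

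The main algebraic obstacle is the $D_t$ recursion: I must show $cI-c^2(D_{t-1}+cI)^{-1}=(D_{t-1}^{-1}+c^{-1}I)^{-1}$. Setting $A=D_{t-1}$, the left side equals $c(A+cI)^{-1}A$ after factoring $c[I-c(A+cI)^{-1}]=c(A+cI)^{-1}[(A+cI)-cI]$, while the right side equals $c(A+cI)^{-1}A$ after writing $A^{-1}+c^{-1}I=c^{-1}A^{-1}(A+cI)$ and inverting; since $A$ commutes with $(A+cI)^{-1}$ the two expressions coincide. Finally, positive definiteness (hence invertibility) of every $D_t$ is maintained inductively: $D_1=bI+x_1x_1^\top\succ0$ since $b>0$, and if $D_{t-1}\succ0$ then $(D_{t-1}^{-1}+c^{-1}I)^{-1}\succ0$, so $D_t\succ0$; this justifies all inversions and guarantees uniqueness of $u_{t-1}^\star$, completing the induction.
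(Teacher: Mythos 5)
Your proof is correct and complete: the base case, the dynamic-programming peel-off $P_t(u_t)=(y_t-u_t^\top x_t)^2+\min_{u_{t-1}}\bigl[P_{t-1}(u_{t-1})+c\Vert u_t-u_{t-1}\Vert^2\bigr]$, the minimizer $(D_{t-1}+cI)^{-1}(e_{t-1}+cu_t)$, the identity $cI-c^2(D_{t-1}+cI)^{-1}=(D_{t-1}^{-1}+c^{-1}I)^{-1}$, and the inductive positive-definiteness of $D_t$ (which justifies every inversion and in fact strengthens the stated PSD claim) all check out. The paper itself does not prove this lemma but imports it from Moroshko and Crammer (2013, Lemma 2), and your inductive quadratic-minimization argument is essentially the same derivation given in that cited source.
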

From the lemma we solve,
$\min_{u_{1},\ldots,u_{t}}Q_{t}\left(u_{1},\ldots,u_{t}\right)$, by,
\begin{equation}
\min_{u_{1},\ldots,u_{t}}Q_{t}\left(u_{1},\ldots,u_{t}\right)
= \min_{u_{t}} P_t(u_{t})
= -e_{t}^{\top}D_{t}^{-1}e_{t}+f_{t} \label{bb}~.
\end{equation}
Next, we substitute the value of $e_{T}$ from \eqref{e} (as a
function of $\yi{T}$) in \eqref{bb}, and then substitute \eqref{bb} in \eqref{minmax_1}. Omitting terms not depending explicitly on $\yi{T}$ and $\hat{y}_T$ we get from \eqref{minmax_1} that,
\begin{align*}
\hat{y}_{T}&=\arg\min_{\hat{y}_{T}\in\{-1,+1\}}\max_{y_{T}\in\{-1,+1\} }\Bigg[\left({x}_{T}^{\top}{D}_{T}^{-1}{x}_{T}\right)y_{T}^{2}\\
&
\!\!\! +2y_{T}\left({x}_{T}^{\top}{D}_{T}^{-1}\left({I}+c^{-1}{D}_{T-1}\right)^{-1}{e}_{T-1}-\hat{y}_{T}\right)+\hat{y}_{T}^{2}\Bigg]\\
&=\arg\min_{\hat{y}_{T}\in\{-1,+1\}}\Bigg[{x}_{T}^{\top}{D}_{T}^{-1}{x}_{T} \\
&+2 \left\vert {x}_{T}^{\top}{D}_{T}^{-1}\left({I}+c^{-1}{D}_{T-1}\right)^{-1}{e}_{T-1}-\hat{y}_{T}\right\vert
+\hat{y}_{T}^{2}\Bigg]\\
&= \sign(\hat{p}_t)|_{t=T}~,
\end{align*}
where
\begin{align}
\hat{p}_t=x_{t}^{\top}D_{t}^{-1}\left(I+c^{-1}D_{t-1}\right)^{-1}e_{t-1}~. \label{my_predictor}
\end{align}
To the best of our knowledge, this is the first application of the last-step min-max
approach directly for classification, and not as a reduction from
regression, which is possible 
 by employing the square loss, as in least-squares
SVMs~\cite{DBLP:journals/npl/SuykensV99,suykens2002least}.
Indeed, we showed that the optimal prediction for
classification is the sign of the optimal prediction for regression \cite{MoroshkoCr13}.

Our algorithm includes the second-order perceptron~\cite{CesaBianchiCoGe05} algorithm as a
special case when $c=\infty$.
The second-order perceptron algorithm is indeed using the sign
of the optimal min-max prediction for regression \cite{Forster}, which is in fact the prediction
of the AAR algorithm \cite{Vovk01} (aka "forward algorithm"~\cite{AzouryWa01}).
Additionally, similar to other algorithms
\cite{Rosenblatt58,CesaBianchiCoGe05}, we update the algorithm only on
mistaken rounds. We call the algorithm LASEC for last-step adaptive
classifier. LASEC is a special case of \figref{algorithm:ss_lasec} when
setting $a=\infty$ (see below). Note that in the pseudocode
two indices are used, the current time $t$ and the number of examples
used to update the model $k$. This makes the presentation simpler as some examples are not used to update the
model, the ones for which there was no classification mistake.
Note, the update equations of \figref{algorithm:ss_lasec} are essentially \eqref{D} and \eqref{e}. The LASEC algorithm can be seen as an extension to the non-stationary setting of the second-order perceptron algorithm \cite{CesaBianchiCoGe05}. Indeed, for $c=\infty$ the LASEC algorithm is reduced to the second-order perceptron algorithm.

Next, we turn LASEC from an algorithm that uses the labels of all
inputs to one that queries labels stochastically. Specifically, the
algorithm uses the margin $\vert \hat{p}_t \vert$ defined in
\eqref{my_predictor} to randomly choose whether to make a
prediction. We interpret large values of the margin $\vert \hat{p}_t \vert$
as being confident in the prediction, which should reduce the
probability of querying the label. Specifically, the algorithm is
querying a label with probability $a/(a+|\hat{p}_{t}|)$ for some
$a>0$. If $a\rightarrow\infty$ the algorithm will always query, and
reduce to LASEC, while if $a\rightarrow 0$ it will never query.

This approach for deriving selective-algorithms from margin-based
online algorithms is not new, and was used to design an algorithm for
the non-drifting case~
\cite{Cesa-BianchiGZ06a}. Yet, unlike other selective sampling
algorithms
\cite{Cesa-BianchiGZ06a,DBLP:conf/icml/Cesa-BianchiGO09,DBLP:conf/colt/DekelGS10,DBLP:conf/icml/OrabonaC11},
our algorithm is designed to work in the {\em drifting setting}. Since
the algorithm is based on the LASEC algorithm, we call it LASEC-SS,
where SS stands for selective sampling. The algorithm is summarized in
\figref{algorithm:ss_lasec} as well.  Note that LASEC-SS includes other
algorithms as special cases. Specifically,  LASEC-SS
 is reduced for $c=\infty$ to the selective sampling version of the
second-order perceptron algorithm~\cite{Cesa-BianchiGZ06a}, and as
mentioned above, for $a=\infty$ it is reduced to LASEC, and the setting
of both $c=\infty,a=\infty$ reduces the algorithm to the second-order
perceptron, which in turn reduces to the perceptron algorithm for
$b\rightarrow\infty$.

The algorithm is flexible enough to be
tuned both to drifting or non-drifting setting (using $c$),
between selective sampling or supervised learning (using $a$), and
between first-order or second-order modeling (using $b$).

Our algorithm can be combined with Mercer kernels as it employs only
sums of inner- and outer-products of the inputs. This allows it to
build non-linear models (e.g.~\cite{ScholkopfSm02}).

\begin{figure}[t!]
{
\paragraph{Parameters:} $0<b<c~,~0<a$
\paragraph{Initialize:} Set
$D_0=(bc)/(c-b)\,I\in\reals^{d\times d}$ , $e_0=0\in\reals^d$ and $k=1$\\
{\bf For $t=1 \comdots T$} do
\begin{itemize}
\nolineskips
\item Receive an instance $x_{t}\in\reals^d$
\item Set
\begin{align*}
  &  S_t=\left(D_{k-1}^{-1}+c^{-1}I\right)^{-1}+x_{t}x_{t}^{\top}&
  \textrm{see }
  \eqref{D}\\
  &
  \hat{p}_{t}=x_{t}^{\top}S_t^{-1}\left(I+c^{-1}D_{k-1}\right)^{-1}e_{k-1}&\textrm{see }\eqref{my_predictor}
\end{align*}
\item Output  prediction $\hyi{t}=\sign(\hat{p}_{t})$
\item Draw a Bernoulli random variable $Z_t\in \{0,1\}$ of parameter $\frac{a}{a+|\hat{p}_{t}|}$
\item If $Z_t=1$ then query label $y_t\in \{-1,+1\}$ and if $\hyi{t}\neq y_t$ then update:
\begin{align*}
&e_k=\left(I+c^{-1}D_{k-1}\right)^{-1}e_{k-1}+y_{t}x_{t}&\textrm{see }\eqref{e}\\
&D_k=S_t&\\
&k\leftarrow k+1&
\end{align*}
\end{itemize}
}
\figline
\caption{LASEC for selective sampling. Set $a=\infty$ for the
  supervised setting.}
\label{algorithm:ss_lasec}
\end{figure}

\section{Analysis}
We now prove bounds for the number of mistakes of our algorithm. We provide a mistake bound for the fully supervised version (LASEC) and for the selective sampling version (LASEC-SS).
Our bounds depend on the total drift of the reference sequence $V_m=\sum_{k=2}^{m}\left\Vert u_{k}-u_{k-1}\right\Vert ^{2}$ which is calculated on rounds when the algorithm makes updates.
We denote by $\mathcal{M}\subseteq \{1,2,\ldots\}$ the set of indices when the algorithm updates. For the supervised setting it is the set of mistaken trials ($M_t=1$). For the selective sampling setting it is the set of indices when $Z_t=1$ and $M_t=1$.

\begin{theorem}
\label{thm:lasec_bound}
Assume the LASEC algorithm (of \figref{algorithm:ss_lasec} with $a=\infty$) is run on a finite sequence of examples.
 Then for any reference sequence $\{u_t\}$ and $\gamma>0$ the number $m=\vert \mathcal{M}\vert$ of mistakes satisfies
\begin{align}
&m\leq \frac{1}{\gamma}L_{\gamma,T}\left( \{u_t\}\right)\nonumber \\
&+\frac{1}{\gamma}\sqrt{\left(b\left\Vert u_{1}\right\Vert ^{2}+cV_m+\sum_{k=1}^{m}\left(u_{k}^{\top}x_{k}\right)^{2}\right)\sum_{t\in\mathcal{M}}x_{t}^{\top}D_{k}^{-1}x_{t}}
\label{laser_bound}
\end{align}
\end{theorem}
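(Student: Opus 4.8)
The plan is to combine the last-step min-max value from \lemref{lem:lemma12} with a per-round analysis of the quantity $f_k - e_k^\top D_k^{-1} e_k$, and then to convert the resulting additive bound into the stated product form by a scaling argument. Throughout I index the update (mistake) rounds $\mathcal{M}$ by $k=1,\ldots,m$ and write $r_k = x_k^\top D_k^{-1} x_k$, noting $0\le r_k<1$.

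First I would fix any reference sequence $\{u_k\}_{k=1}^m$. By \lemref{lem:lemma12} and \eqref{bb} the value of the inner problem is $\min_{u_1,\ldots,u_m} Q_m = f_m - e_m^\top D_m^{-1} e_m$, so evaluating $Q_m$ at the chosen sequence gives the comparator inequality
\[ f_m - e_m^\top D_m^{-1} e_m \le b\|u_1\|^2 + c V_m + \sum_{k=1}^m (y_k - u_k^\top x_k)^2 . \]
Expanding the square and using $y_k^2 = 1$ isolates the comparator margin $\sum_k y_k u_k^\top x_k$ on the right.

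The heart of the argument is a matching lower bound on $f_m - e_m^\top D_m^{-1} e_m$. I would show that the per-round increment telescopes to
\[ \left(f_k - e_k^\top D_k^{-1} e_k\right) - \left(f_{k-1} - e_{k-1}^\top D_{k-1}^{-1} e_{k-1}\right) = \frac{\left(\hat{p}_k - y_k(1-r_k)\right)^2}{1-r_k} . \]
Establishing this identity is the main obstacle: it requires the Sherman--Morrison formula for $S_t^{-1} = (A_k + x_k x_k^\top)^{-1}$ with $A_k = (D_{k-1}^{-1}+c^{-1}I)^{-1}$, together with the identity $(I+c^{-1}D_{k-1})^{-1} = A_k D_{k-1}^{-1}$ and the Woodbury expansion $A_k = D_{k-1} - D_{k-1}(cI+D_{k-1})^{-1}D_{k-1}$, so that the shrinkage terms produced by the drift cancel cleanly; the initialization $D_0 = (bc)/(c-b)I,\ e_0=0$ makes the recursion start correctly at $k=1$. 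On a mistake round $\hat{y}_k \neq y_k$ forces $y_k \hat{p}_k \le 0$, so $\hat{p}_k$ and $y_k(1-r_k)$ have opposite signs and $(\hat{p}_k - y_k(1-r_k))^2 \ge (1-r_k)^2$; since $0 \le r_k < 1$ the increment is at least $1-r_k$. Summing over $\mathcal{M}$ yields $f_m - e_m^\top D_m^{-1} e_m \ge m - \sum_k r_k$.

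Chaining the two bounds and cancelling the common $m$ leaves the linear-in-comparator inequality
\[ 2\sum_{k} y_k u_k^\top x_k \le b\|u_1\|^2 + c V_m + \sum_{k}(u_k^\top x_k)^2 + \sum_{k} r_k . \]
Finally I would convert this additive bound into the product form by applying it to the rescaled sequence $\{\beta u_k\}$ for $\beta>0$: the left side and the $b\|u_1\|^2 + cV_m + \sum_k(u_k^\top x_k)^2$ term scale by $\beta$ and $\beta^2$ respectively, while $\sum_k r_k$ depends only on the inputs and the algorithm's predictions (not on the comparator), so dividing by $2\beta$ and minimizing over $\beta$ (AM--GM) gives $\sum_k y_k u_k^\top x_k \le \sqrt{\left(b\|u_1\|^2 + cV_m + \sum_k (u_k^\top x_k)^2\right)\sum_k r_k}$. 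The proof concludes with the standard margin step: on each $k\in\mathcal{M}$, $y_k u_k^\top x_k \ge \gamma - \ell_{\gamma,k}(u_k)$, hence $\sum_{k\in\mathcal{M}} y_k u_k^\top x_k \ge \gamma m - \sum_{k\in\mathcal{M}}\ell_{\gamma,k}(u_k) \ge \gamma m - L_{\gamma,T}(\{u_t\})$ by nonnegativity of the hinge loss. Solving for $m$ and recalling $\sum_k r_k = \sum_{t\in\mathcal{M}} x_t^\top D_k^{-1} x_t$ gives exactly \eqref{laser_bound}.
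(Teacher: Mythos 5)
Your proof is correct, and its end-game (rescale the comparator, apply $\gamma-\ell_{\gamma,k}\left(u_{k}\right)\leq y_{k}u_{k}^{\top}x_{k}$, optimize the scale by AM--GM) is essentially the paper's, which rescales $u_{k}\mapsto(a/\gamma)u_{k}$ and optimizes $a$. Where you genuinely diverge is the key ingredient: the paper invokes Theorem 4 of Moroshko and Crammer \cite{MoroshkoCr13} as a black box to obtain the square-loss regret bound \eqref{a1} for the underlying regression predictor, and then expands the squares and uses $y_{t}\hat{p}_{t}=-\left|\hat{p}_{t}\right|$ on mistake rounds; you instead re-derive that bound from scratch by telescoping the potential $\Phi_{k}=f_{k}-e_{k}^{\top}D_{k}^{-1}e_{k}$. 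Your per-round identity does check out: the drift step preserves the potential because $D_{k-1}^{-1}\left(D_{k-1}^{-1}+c^{-1}I\right)^{-1}D_{k-1}^{-1}=D_{k-1}^{-1}-\left(cI+D_{k-1}\right)^{-1}$, so the decrement $-e_{k-1}^{\top}\left(cI+D_{k-1}\right)^{-1}e_{k-1}$ in the $f$-recursion of \lemref{lem:lemma12} cancels exactly against the change in $e^{\top}D^{-1}e$, after which Sherman--Morrison on $D_{k}=A_{k}+x_{k}x_{k}^{\top}$ yields $\Phi_{k}-\Phi_{k-1}=\left(\hat{p}_{k}-y_{k}\left(1-r_{k}\right)\right)^{2}/\left(1-r_{k}\right)$ with $r_{k}=x_{k}^{\top}D_{k}^{-1}x_{k}\in[0,1)$, and the initialization $D_{0}=(bc)/(c-b)I$, $e_{0}=0$ gives $A_{1}=bI$, matching the base case of \lemref{lem:lemma12}. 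On mistake rounds $y_{k}\hat{p}_{k}\leq0$ indeed forces the increment to be at least $1-r_{k}$, and chaining $m-\sum_{k}r_{k}\leq\Phi_{m}\leq Q_{m}\left(u_{1},\ldots,u_{m}\right)$ recovers, after cancelling $m$, exactly the paper's intermediate inequality with $\sum_{t\in\mathcal{M}}\left|\hat{p}_{t}\right|$ already lower-bounded by zero. What each route buys: yours is self-contained (it implicitly re-proves the cited LASER result, and as an exact identity it makes transparent precisely what slack is discarded), whereas the paper's is shorter and, crucially, keeps the $\left|\hat{p}_{t}\right|$ terms alive so that one chain of inequalities ending at \eqref{a2} proves \thmref{thm:lasec_bound} and \thmref{thm:lasec_ss_bound} simultaneously. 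Since your bound on the increment discards $2\left|\hat{p}_{k}\right|+\hat{p}_{k}^{2}/\left(1-r_{k}\right)$, to extend your argument to the selective sampling bound you would retain the term $2\left|\hat{p}_{k}\right|$, i.e.\ use $\Phi_{k}-\Phi_{k-1}\geq1-r_{k}+2\left|\hat{p}_{k}\right|$, which reproduces \eqref{a2} and hence the expectation argument of \thmref{thm:lasec_ss_bound} as well.
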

\begin{remark}
For the stationary case, when $u_{k}=u$ $\forall k$ ($V_m=0$) and
we set $c=\infty$ for the LASEC algorithm we recover the second-order
perceptron bound \cite{CesaBianchiCoGe05}.
\end{remark}
\begin{theorem}
\label{thm:lasec_ss_bound}
Assume the LASEC-SS algorithm of \figref{algorithm:ss_lasec} is run on a sequence of $T$ examples with parameter $a>0$.
 Then for any reference sequence $\{u_t\}$ and $\gamma>0$ the expected
 number of mistakes satisfies
\begin{align}
&\mathbb{E}\left[\sum_{t=1}^{T}M_{t}\right]  \leq  \frac{1}{\gamma}\bar{L}_{\gamma,T}\left(\{u_{t}\}\right) \nonumber \\
&+ \frac{a}{2\gamma^{2}}\left(b\left\Vert u_{1}\right\Vert ^{2}+cV_m +\mathbb{E}\left[\sum_{t=1}^{T}M_{t}Z_{t}\left(u_{t}^{\top}x_{t}\right)^{2}\right]\right) \nonumber \\
& +\frac{1}{2a}\mathbb{E}\left[\sum_{t=1}^{T}M_{t}Z_{t}x_{t}^{\top}D_{t}^{-1}x_{t}\right]~.
\label{a3}
\end{align}
Moreover, the expected number of labels queried by the algorithm equals $\sum_{t=1}^{T}\mathbb{E}\left[\frac{a}{a+|\hat{p}_{t}|}\right]$~.
\end{theorem}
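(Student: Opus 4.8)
The plan is to prove \thmref{thm:lasec_ss_bound} by running the deterministic analysis behind \thmref{thm:lasec_bound} over the \emph{random} update set $\mathcal{M}=\{t:Z_t=1,M_t=1\}$, and then converting the resulting realization-wise bound into a bound on $\mathbb{E}\!\left[\sum_t M_t\right]$ via a conditional-expectation identity for the Bernoulli queries. The central object is the min-max value $\Phi_k=\min_{u_1,\dots,u_k}Q_k=f_k-e_k^{\top}D_k^{-1}e_k$ from \eqref{bb}, indexed by the update counter $k$ (with $\Phi_0=0$ given the initialization of \figref{algorithm:ss_lasec}), which plays the role of a potential.

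First I would set up the deterministic core. On any update round ($M_t=1$) the hinge loss gives $\gamma\le\ell_{\gamma,t}(u_t)+y_tu_t^{\top}x_t$, so summing over $\mathcal{M}$ reduces the task to controlling $\sum_{t\in\mathcal{M}}y_tu_t^{\top}x_t$. Since $Q_m(\{u_t\})\ge\Phi_m$ and $y_t^2=1$, expanding $Q_m$ yields $2\sum_{t\in\mathcal{M}}y_tu_t^{\top}x_t\le\big(b\|u_1\|^2+cV_m+\sum_{t\in\mathcal{M}}(u_t^{\top}x_t)^2\big)+(m-\Phi_m)$. The key technical lemma is an exact evaluation of the potential increment: using the recursions \eqref{D}--\eqref{e} together with Sherman--Morrison one shows $\Phi_k-\Phi_{k-1}=(1-y_tq_t)^2/\beta_t\ge0$, where (round $t$ being the $k$-th update) $q_t=x_t^{\top}D_{k-1}^{-1}e_{k-1}$, $\beta_t=1+x_t^{\top}(D_{k-1}^{-1}+c^{-1}I)^{-1}x_t$, and the prediction \eqref{my_predictor} is $\hat{p}_t=q_t/\beta_t$. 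Telescoping then gives $m-\Phi_m=\sum_{t\in\mathcal{M}}x_t^{\top}D_t^{-1}x_t+\sum_{t\in\mathcal{M}}(2y_t\hat{p}_t-\beta_t\hat{p}_t^{2})$, and on mistaken rounds $y_t\hat{p}_t\le0$, so $2y_t\hat{p}_t=-2|\hat{p}_t|$.

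Next I would introduce a free parameter by comparator scaling: replacing $u_t$ by $\lambda u_t$ and the margin by $\lambda\gamma$ scales the comparator complexity by $\lambda^2$ and the hinge term by $\lambda$, while leaving the algorithm-dependent quantities $\sum|\hat{p}_t|$, $\sum\beta_t\hat{p}_t^{2}$, $\sum x_t^{\top}D_t^{-1}x_t$ untouched. Combining the two displays above, discarding the nonnegative $\sum\beta_t\hat{p}_t^{2}$ but \emph{retaining} $\sum_{t\in\mathcal{M}}|\hat{p}_t|$, produces the realization-wise master inequality $\lambda\gamma m+\sum_{t\in\mathcal{M}}|\hat{p}_t|\le\lambda\sum_{t\in\mathcal{M}}\ell_{\gamma,t}(u_t)+\tfrac{\lambda^{2}}{2}\big(b\|u_1\|^2+cV_m+\sum_{t\in\mathcal{M}}(u_t^{\top}x_t)^2\big)+\tfrac12\sum_{t\in\mathcal{M}}x_t^{\top}D_t^{-1}x_t$. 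This is exactly where the supervised proof of \thmref{thm:lasec_bound} instead drops the $|\hat{p}_t|$ term and optimizes over $\lambda$ to produce a square root; for selective sampling I keep it. Then I randomize: conditioning on the history $\mathcal{F}_{t-1}$, the quantities $M_t$ and $|\hat{p}_t|$ are $\mathcal{F}_{t-1}$-measurable while $Z_t$ is an independent Bernoulli of mean $a/(a+|\hat{p}_t|)$, giving $M_t=(1+|\hat{p}_t|/a)\,\mathbb{E}[M_tZ_t\mid\mathcal{F}_{t-1}]$; the tower property then yields $\mathbb{E}[\sum_t M_t]=\mathbb{E}[\sum_t M_tZ_t]+\tfrac1a\mathbb{E}[\sum_t|\hat{p}_t|M_tZ_t]=\mathbb{E}\big[m+\tfrac1a\sum_{t\in\mathcal{M}}|\hat{p}_t|\big]$. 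Choosing $\lambda=a/\gamma$ makes the master left-hand side equal to $a\big(m+\tfrac1a\sum_{t\in\mathcal{M}}|\hat{p}_t|\big)$; dividing by $a$, taking expectations, and identifying $\mathbb{E}[\sum_t M_tZ_t\ell_{\gamma,t}(u_t)]=\bar{L}_{\gamma,T}(\{u_t\})$ gives \eqref{a3}. The query-count claim follows from $\mathbb{E}[Z_t]=\mathbb{E}[\mathbb{E}[Z_t\mid\mathcal{F}_{t-1}]]=\mathbb{E}[a/(a+|\hat{p}_t|)]$.

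I expect the main obstacle to be twofold. First is the exact matrix identity for $\Phi_k-\Phi_{k-1}$, which demands careful bookkeeping of the $(D_{k-1}^{-1}+c^{-1}I)^{-1}$ and $(I+c^{-1}D_{k-1})^{-1}$ factors and the Sherman--Morrison correction in order to isolate precisely the margin $\hat{p}_t$ and the quadratic form $x_t^{\top}D_t^{-1}x_t$. Second, and specific to the selective-sampling statement, is the realization that the query parameter $a$ must be coupled to the comparator-scaling parameter through $\lambda=a/\gamma$, so that the \emph{retained} margin term $\sum|\hat{p}_t|$ aligns exactly with the $\tfrac1a\sum|\hat{p}_t|$ generated by the Bernoulli identity; this is the reason a fixed $a$ replaces the optimized square root of \thmref{thm:lasec_bound}. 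A minor point to treat carefully is that $V_m$ and $\sum_{t\in\mathcal{M}}(u_t^{\top}x_t)^2$ are themselves random through $\mathcal{M}$, so the second term of \eqref{a3} is to be read with the expectation acting on the whole update-dependent complexity.
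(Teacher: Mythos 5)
Your proposal is correct, and its selective-sampling skeleton is the same as the paper's: restrict attention to the random update set $\mathcal{M}=\{t:M_tZ_t=1\}$, establish the realization-wise master inequality $\sum_{t\in\mathcal{M}}\left(\left|\hat{p}_{t}\right|+a\right)\leq\frac{a}{\gamma}\sum_{t\in\mathcal{M}}\ell_{\gamma,t}\left(u_{t}\right)+\frac{a^{2}}{2\gamma^{2}}\left(b\left\Vert u_{1}\right\Vert ^{2}+cV_m+\sum_{k=1}^{m}\left(u_{k}^{\top}x_{k}\right)^{2}\right)+\frac{1}{2}\sum_{t\in\mathcal{M}}x_{t}^{\top}D_{k}^{-1}x_{t}$, which is exactly the paper's \eqref{a2} (your coupling $\lambda=a/\gamma$ is the paper's substitution $u_k\mapsto\frac{a}{\gamma}u_k$), and finish with the conditional Bernoulli identity $\mathbb{E}\left[M_tZ_t\left(\left|\hat{p}_t\right|+a\right)\right]=a\,\mathbb{E}\left[M_t\right]$ and the tower rule --- precisely how the paper converts \eqref{a2} into \eqref{a3}, though you are more explicit than the paper about the filtration under which $M_t$ and $\hat{p}_t$ are fixed before $Z_t$ is drawn. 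The one genuine divergence is upstream: where the paper imports the square-loss regret bound \eqref{a1} wholesale from Theorem 4 of \cite{MoroshkoCr13}, you re-derive it inline from the potential $\Phi_k=f_k-e_k^{\top}D_k^{-1}e_k$ of \eqref{bb} via the exact increment $\Phi_k-\Phi_{k-1}=(1-y_tq_t)^2/\beta_t$. I checked this identity and it holds: the inf-convolution with $c\left\Vert u-v\right\Vert^2$ preserves the minimum of the quadratic envelope (eigenvalue-wise, $(c+d)^{-1}+(d+c^{-1}d^{2})^{-1}=d^{-1}$), the shrunk mean satisfies $\left(D_{k-1}^{-1}+c^{-1}I\right)\left(I+c^{-1}D_{k-1}\right)^{-1}=D_{k-1}^{-1}$, which justifies your $q_t=x_t^{\top}D_{k-1}^{-1}e_{k-1}$, and Sherman--Morrison gives both $\hat{p}_t=q_t/\beta_t$ and $(\beta_t-1)/\beta_t=x_t^{\top}D_k^{-1}x_t$; your telescoping together with $2y_t\hat{p}_t=-2\left|\hat{p}_t\right|$ on mistaken rounds then reproduces \eqref{a2}, in fact marginally tightened, since you discard $\beta_t\hat{p}_t^{2}\geq\hat{p}_t^{2}$ where the paper discards only $\hat{p}_t^{2}$. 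What each route buys: the paper's citation keeps the proof short and unified with \thmref{thm:lasec_bound}; your version is self-contained and makes visible that the theorem needs the regret inequality only realization-wise on the random set $\mathcal{M}$, which also supports your (correct) closing caveat that $V_m$ and $\sum_{k}\left(u_{k}^{\top}x_{k}\right)^{2}$ are random through $\mathcal{M}$, so the middle term of \eqref{a3} should be read with the expectation covering the update-dependent quantities --- a sloppiness present in the paper's own statement.
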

\begin{remark}
As in other context~\cite{Cesa-BianchiGZ06a}: \thmref{thm:lasec_bound} is not a special case of \thmref{thm:lasec_ss_bound}. Indeed, setting $a=\infty$  makes the bound of \thmref{thm:lasec_ss_bound} unbounded, as opposed to \thmref{thm:lasec_bound}.  Also, from the last part of \thmref{thm:lasec_ss_bound} we observe that more labels would be queried for larger values of $a$. However, the tradeoff between number of queries and mistakes is not clear.
\end{remark}
\begin{remark}
For the stationary case, when $u_{k}=u$ $\forall k$ ($V_m=0$) and
we set $c=\infty$ for the LASEC-SS algorithm we recover the bound
of the selective sampling version of the second-order perceptron algorithm
\cite{Cesa-BianchiGZ06a}.
\end{remark}
The bound \eqref{a3} depends on the 
parameter $a$. If we would know the future, by setting,
\[
a=\gamma\sqrt{\frac{\mathbb{E}\left[\sum_{t=1}^{T}M_{t}Z_{t}x_{t}^{\top}D_{t}^{-1}x_{t}\right]}{b\left\Vert u_{1}\right\Vert ^{2}+cV_m+\mathbb{E}\left[\sum_{t=1}^{T}M_{t}Z_{t}\left(u_{t}^{\top}x_{t}\right)^{2}\right]}}
\]
 we would minimize the bound and get
\begin{align*}
&\mathbb{E}\left[\sum_{t=1}^{T}M_{t}\right] \leq  \frac{1}{\gamma}\bar{L}_{\gamma,T}\left(\{u_{t}\}\right)
  \\
& + \frac{1}{\gamma}\sqrt{
\begin{aligned}
\left(b\left\Vert u_{1}\right\Vert ^{2}+cV_m+\mathbb{E}\left[\sum_{t=1}^{T}M_{t}Z_{t}\left(u_{t}^{\top}x_{t}\right)^{2}\right]\right) \\
\times\left(\mathbb{E}\left[\sum_{t=1}^{T}M_{t}Z_{t}x_{t}^{\top}D_{t}^{-1}x_{t}\right]\right)
\end{aligned}
}
~.
\end{align*}

The last bound is an expectation version of the mistake bound for the (deterministic) LASEC
algorithm of \thmref{thm:lasec_bound}, and it might be even sharper than the LASEC bound, since the magnitude of the three quantities
$\bar{L}_{\gamma,T}\left(\{u_{t}\}\right)$,
$\mathbb{E}\left[\sum_{t=1}^{T}M_{t}Z_{t}\left(u_{t}^{\top}x_{t}\right)^{2}\right]$ and
$\mathbb{E}\left[\sum_{t=1}^{T}M_{t}Z_{t}x_{t}^{\top}D_{t}^{-1}x_{t}\right]$
is ruled by the size of the random set of updates
$\{t: Z_tM_t=1\}$, which is typically smaller than the set of mistaken trials of the deterministic algorithm.

We now prove the bounds in \thmref{thm:lasec_bound} and \thmref{thm:lasec_ss_bound}, in the following unified proof.
\begin{proof}
Consider only the rounds $t$ when the algorithm makes an update,
that is $t\in\mathcal{M}$. 
Noting that our choice $\hat{p}_{t}=x_{t}^{\top}S_{t}^{-1}\left(I+c^{-1}D_{k-1}\right)^{-1}e_{k-1}$ (in \figref{algorithm:ss_lasec}) is the same as the prediction of the LASER algorithm for regression
with drift, we can use the result proven by Moroshko and Crammer \cite{MoroshkoCr13}
(Theorem 4 therein), from where we have that for any sequence $u_{1},\ldots,u_{m}$
\begin{align}
\sum_{t\in\mathcal{M}}\left(\hat{p}_{t}-y_{t}\right)^{2}  \leq & b\left\Vert u_{1}\right\Vert ^{2}+c\sum_{k=2}^{m}\left\Vert u_{k}-u_{k-1}\right\Vert ^{2} \nonumber \\
& \!\!\!\!\!\!\!\!\!\!\!\!\!\!\!\!\! +\sum_{k=1}^{m}\left(y_{k}-u_{k}^{\top}x_{k}\right)^{2}+\sum_{t\in\mathcal{M}}x_{t}^{\top}D_{k}^{-1}x_{t}~.
\label{a1}
\end{align}
Note that in \eqref{a1} the sums are over rounds when the algorithm
makes an update (for simplicity, we write $y_{k}$ as shorthand for $y_{t_{k}}$ where $t_k\in\mathcal{M}$).
Expanding the squares in \eqref{a1}, lower bound $\hat{p}_{t}^{2}\geq0$ and
substituting $y_{t}\hat{p}_{t}=-\left|\hat{p}_{t}\right|$ when $t\in\mathcal{M}$
we obtain
\begin{align*}
\sum_{t\in\mathcal{M}}\left|\hat{p}_{t}\right|  \leq & \frac{b}{2}\left\Vert u_{1}\right\Vert ^{2}+\frac{c}{2}V_m-\sum_{k=1}^{m}y_{k}u_{k}^{\top}x_{k}\\
&+\frac{1}{2}\sum_{k=1}^{m}\left(u_{k}^{\top}x_{k}\right)^{2}+\frac{1}{2}\sum_{t\in\mathcal{M}}x_{t}^{\top}D_{k}^{-1}x_{t}~.
\end{align*}
 The last bound is correct for any sequence $u_{k}$. We replace $u_{k}$
with $\frac{a}{\gamma}u_{k}$ (for some $a>0$) and get
\begin{align*}
\sum_{t\in\mathcal{M}}\left|\hat{p}_{t}\right|  \leq & b\frac{a^{2}}{2\gamma^{2}}\left\Vert u_{1}\right\Vert ^{2}+c\frac{a^{2}}{2\gamma^{2}}V_m-\frac{a}{\gamma}\sum_{k=1}^{m}y_{k}u_{k}^{\top}x_{k}\\
&+\frac{a^{2}}{2\gamma^{2}}\sum_{k=1}^{m}\left(u_{k}^{\top}x_{k}\right)^{2}+\frac{1}{2}\sum_{t\in\mathcal{M}}x_{t}^{\top}D_{k}^{-1}x_{t}~.
\end{align*}
Using $\gamma-\ell_{\gamma,t}\left(u_{t}\right)\leq
y_{t}u_{t}^{\top}x_{t}$, which follows the definition of the hinge loss,
we get
\begin{align}
\sum_{t\in\mathcal{M}}\left(\left|\hat{p}_{t}\right|+a\right) & \leq   \frac{a}{\gamma}\sum_{t\in\mathcal{M}}\ell_{\gamma,t}\left(u_{t}\right)
+\frac{1}{2}\sum_{t\in\mathcal{M}}x_{t}^{\top}D_{k}^{-1}x_{t}
\nonumber \\
& \!\!\!\!\!\!\!\!\!\!\!\!\!\!\!\!\!
+ \frac{a^{2}}{2\gamma^{2}}\left(b\left\Vert u_{1}\right\Vert ^{2}+cV_m +\sum_{k=1}^{m}\left(u_{k}^{\top}x_{k}\right)^{2}\right) ~.
\label{a2}
\end{align}
Next, to prove the bound for the LASEC algorithm in \thmref{thm:lasec_bound}, we further bound
$\left|\hat{p}_{t}\right|\geq0$ in \eqref{a2} and then divide it by $a$. We obtain
\begin{align*}
m  \leq & \frac{1}{\gamma}\sum_{t\in\mathcal{M}}\ell_{\gamma,t}\left(u_{t}\right) +\frac{1}{2a}\sum_{t\in\mathcal{M}}x_{t}^{\top}D_{k}^{-1}x_{t}\\
&+ \frac{a}{2\gamma^{2}}\left(b\left\Vert u_{1}\right\Vert ^{2}+cV_m +\sum_{k=1}^{m}\left(u_{k}^{\top}x_{k}\right)^{2}\right)
~.
\end{align*}
The last bound is minimized by setting
\[
a=\gamma\sqrt{\frac{\sum_{t\in\mathcal{M}}x_{t}^{\top}D_{k}^{-1}x_{t}}{b\left\Vert u_{1}\right\Vert ^{2}+cV_m+\sum_{k=1}^{m}\left(u_{k}^{\top}x_{k}\right)^{2}}}~,
\]
and by using $\sum_{t\in\mathcal{M}}\ell_{\gamma,t}\left(u_{t}\right) \leq\ L_{\gamma,T}\left( \{u_t\}\right)$ we get the desired bound of \thmref{thm:lasec_bound},
\begin{align*}
&m\leq  \frac{1}{\gamma}L_{\gamma,T}\left( \{u_t\}\right)\\
&+\frac{1}{\gamma}\sqrt{\left(b\left\Vert u_{1}\right\Vert ^{2}+cV_m+\sum_{k=1}^{m}\left(u_{k}^{\top}x_{k}\right)^{2}\right)\sum_{t\in\mathcal{M}}x_{t}^{\top}D_{k}^{-1}x_{t}}~.
\end{align*}

To prove the mistake bound for the LASEC-SS algorithm in \thmref{thm:lasec_ss_bound}, we note that the sum
$\sum_{t\in\mathcal{M}}\left(\left|\hat{p}_{t}\right|+a\right)$ on
the LHS of \eqref{a2} can be written as $\sum_{t}M_{t}Z_{t}\left(\left|\hat{p}_{t}\right|+a\right)$.
Taking expectation on both sides of \eqref{a2} and using $\mathbb{E}Z_{t}=a/\left(a+\left|\hat{p}_{t}\right|\right)$
we bound the expected number of mistakes of the algorithm,
\begin{align*}
&\mathbb{E}\left[\sum_{t=1}^{T}M_{t}\right]  \leq  \frac{1}{\gamma}\bar{L}_{\gamma,T}\left(\{u_{t}\}\right) \\
&+ \frac{a}{2\gamma^{2}}\left(b\left\Vert u_{1}\right\Vert ^{2}+cV_m +\mathbb{E}\left[\sum_{t=1}^{T}M_{t}Z_{t}\left(u_{t}^{\top}x_{t}\right)^{2}\right]\right) \\
& +\frac{1}{2a}\mathbb{E}\left[\sum_{t=1}^{T}M_{t}Z_{t}x_{t}^{\top}D_{t}^{-1}x_{t}\right]~.
\end{align*}
The value of the expected number of queried labels trivially follows, $\mathbb{E}\left[\sum_{t=1}^{T}Z_t\right]=\sum_{t=1}^{T}\mathbb{E}\left[\frac{a}{a+|\hat{p}_{t}|}\right]$~.
\QED
\end{proof}
Next, we further bound the term $\sum_{t\in\mathcal{M}}x_{t}^{\top}D_{k}^{-1}x_{t}$
in \thmref{thm:lasec_bound}. Using Lemma 5 and Lemma 7 of Moroshko and
Crammer~\cite{MoroshkoCr13} we have
\begin{align}
&\sum_{t\in\mathcal{M}}x_{t}^{\top}D_{k}^{-1}x_{t}  \leq  \ln\left|\frac{1}{b}D_{m}\right|+c^{-1}\sum_{k=1}^{m}\tr\left(D_{k-1}\right)\nonumber\\
 & \leq  \ln\left|\frac{1}{b}D_{m}\right|+c^{-1}\tr\left(D_{0}\right)\nonumber\\
 &+\frac{m}{c}d\max\left\{ \frac{3X^{2}+\sqrt{X^{4}+4X^{2}c}}{2},b+X^{2}\right\}~,
\label{a4}
\end{align}
where $\left\Vert x_{t}\right\Vert ^{2}\leq X^{2}$. Substituting \eqref{a4} in \eqref{laser_bound} we get a bound of the form
$m\leq\frac{1}{\gamma}D+\frac{1}{\gamma}\sqrt{A\left(B+mC\right)}$ for
the LASEC algorithm, solved for $m$ with the following technical lemma.
\begin{lemma}
\label{lem:lemma1}
Let $A,B,C,D,\gamma,m>0$ satisfy $m\leq\frac{1}{\gamma}D+\frac{1}{\gamma}\sqrt{A\left(B+mC\right)}$.
Then
\begin{align}
m\leq & \frac{1}{\gamma}D+\frac{1}{2\gamma^{2}}AC \nonumber \\
& +\frac{1}{\gamma}\sqrt{\frac{1}{\gamma}DAC+\frac{1}{4\gamma^{2}}\left(AC\right)^{2}+AB}~.
\label{new_bound}
\end{align}
\end{lemma}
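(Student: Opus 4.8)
The plan is to treat the hypothesis as a self-referential inequality in $m$ and solve it by isolating the radical, squaring, and invoking the quadratic formula. First I would multiply through by $\gamma$ and move $D$ to the left, rewriting the assumption as $\gamma m - D \leq \sqrt{A(B+mC)}$. At this point I would split on the sign of $\gamma m - D$. If $\gamma m - D \leq 0$, then $m \leq D/\gamma$, which is already dominated by the claimed bound \eqref{new_bound}, since the two remaining summands there are nonnegative (all of $A,B,C,D,\gamma$ are positive); so the conclusion holds trivially in this case.

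In the complementary case $\gamma m - D > 0$, both sides of $\gamma m - D \leq \sqrt{A(B+mC)}$ are nonnegative, so squaring preserves the inequality and gives $(\gamma m - D)^2 \leq A(B+mC)$. Expanding and collecting powers of $m$ produces the quadratic inequality
\begin{equation*}
\gamma^{2} m^{2} - (2\gamma D + AC)\,m + (D^{2} - AB) \leq 0~.
\end{equation*}
Because the leading coefficient $\gamma^{2}$ is positive, the left-hand side is a convex parabola in $m$, and its discriminant $(2\gamma D + AC)^{2} - 4\gamma^{2}(D^{2} - AB)$ is strictly positive (the $D^{2}$ contributions cancel and $4\gamma^{2}AB>0$ survives), so two real roots exist and $m$ must lie at or below the larger one. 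Hence
\begin{equation*}
m \leq \frac{2\gamma D + AC + \sqrt{(2\gamma D + AC)^{2} - 4\gamma^{2}(D^{2} - AB)}}{2\gamma^{2}}~.
\end{equation*}

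It then remains to recast this root in the stated form. Expanding the discriminant cancels the $4\gamma^{2}D^{2}$ terms and leaves $4\gamma D AC + (AC)^{2} + 4\gamma^{2}AB$; distributing the prefactor $1/(2\gamma^{2})$ over the first two summands $2\gamma D + AC$ yields exactly $D/\gamma + AC/(2\gamma^{2})$, while pulling a factor $2\gamma$ out of the remaining square root rewrites it as $\frac{1}{\gamma}\sqrt{\frac{1}{\gamma}DAC + \frac{1}{4\gamma^{2}}(AC)^{2} + AB}$, reproducing \eqref{new_bound}. The only genuine subtlety is the case split that justifies squaring (squaring is valid precisely when the isolated left-hand side is nonnegative, and the degenerate sign is absorbed harmlessly into the claimed bound); the rest is routine algebra, and I expect the discriminant simplification to be the single place where the bookkeeping of the $4\gamma^{2}$ factors must be tracked with care.
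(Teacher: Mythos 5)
Your proof is correct and follows essentially the same route as the paper's: isolate the radical, square to obtain the quadratic $\gamma^{2}m^{2}-(2\gamma D+AC)m+(D^{2}-AB)\leq 0$, and bound $m$ by the larger root, whose simplification via the cancelling $4\gamma^{2}D^{2}$ terms matches \eqref{new_bound} exactly. If anything, your treatment is slightly more careful than the paper's, which asserts ``equivalency'' of the chain of inequalities even though squaring requires the sign condition $\gamma m - D\geq 0$; your explicit case split (absorbing $m\leq D/\gamma$ into the bound trivially) closes that small gap.
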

The proof appears in the supplementary material.
Using \lemref{lem:lemma1} we have the bound \eqref{new_bound} for the LASEC algorithm,
where
\begin{align*}
&A = b\left\Vert u_{1}\right\Vert ^{2}+cV_m+\sum_{k=1}^{m}\left(u_{k}^{\top}x_{k}\right)^{2},\\
&B = \ln\left|\frac{1}{b}D_{m}\right|+c^{-1}\tr\left(D_{0}\right),\\
&C = c^{-1}d\max\left\{ \left(3X^{2}+\sqrt{X^{4}+4X^{2}c}\right)/2,b+X^{2}\right\},\\
&D = L_{\gamma,T}\left( \{u_t\}\right)~.
\end{align*}
Next, we use corollary 8 from Moroshko and Crammer~\cite{MoroshkoCr13} to get the final bound for LASEC.
\begin{corollary}
\label{cor1}
Assume $\left\Vert x_{t}\right\Vert ^{2}\leq X^{2}$ and
set $b=\varepsilon c$ for some $0<\varepsilon<1$. Denote $\mu=\max\left\{ 9/8X^{2},\frac{\left(b+X^{2}\right)^{2}}{8X^{2}}\right\}$. Assume the LASEC algorithm is run on $T$ examples. If
$V_m\leq T\frac{\sqrt{2}dX}{\mu^{3/2}}$ then by setting $c=\left(\frac{\sqrt{2}TdX}{V_m}\right)^{2/3}$
we have the bound \eqref{new_bound} for the number of mistakes of the LASEC algorithm,
 where
\begin{align*}
&D = L_{\gamma,T}\left( \{u_t\}\right)  , \\
&A = b\left\Vert u_{1}\right\Vert ^{2}+\left(\sqrt{2}dX\right)^{2/3}T^{2/3}V_{m}^{1/3}+\sum_{k=1}^{m}\left(u_{k}^{\top}x_{k}\right)^{2},\\
&B = \ln\left|\frac{1}{b}D_{m}\right|+\frac{\varepsilon}{1-\varepsilon}d  ,\\
&C = \left(4dX\right)^{2/3}T^{-1/3}V_{m}^{1/3}~.
\end{align*}
\end{corollary}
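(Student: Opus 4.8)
The plan is to obtain the corollary by specializing the general bound \eqref{new_bound}, whose constants $A,B,C,D$ were already identified above through \lemref{lem:lemma1} and \eqref{a4}; the only remaining freedom is the regularization, which the corollary pins down via $b=\varepsilon c$ and the explicit schedule $c=\left(\sqrt{2}TdX/V_m\right)^{2/3}$. First I would record the key consequence of the drift assumption: the hypothesis $V_m\le T\sqrt{2}dX/\mu^{3/2}$ is equivalent to $\mu\le\left(\sqrt{2}TdX/V_m\right)^{2/3}=c$, so the chosen $c$ satisfies $c\ge\mu$, and therefore both $c\ge\tfrac{9}{8}X^2$ and $c\ge(b+X^2)^2/(8X^2)$ hold separately, each coming from one branch of the maximum defining $\mu$.

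Next I would dispatch the two easy constants. For $B$, since $D_0=\tfrac{bc}{c-b}I$ and $b=\varepsilon c$ (note $c-b=c(1-\varepsilon)>0$), one has $\tr(D_0)=d\tfrac{bc}{c-b}=d\tfrac{\varepsilon c}{1-\varepsilon}$, whence $c^{-1}\tr(D_0)=\tfrac{\varepsilon}{1-\varepsilon}d$ and $B=\ln\left|\tfrac{1}{b}D_m\right|+\tfrac{\varepsilon}{1-\varepsilon}d$. For $A$, substituting the schedule into the drift summand gives $cV_m=\left(\sqrt{2}TdX\right)^{2/3}V_m^{1/3}=\left(\sqrt{2}dX\right)^{2/3}T^{2/3}V_m^{1/3}$, which is exactly the middle term of the stated $A$; the other two summands and $D=L_{\gamma,T}\left(\{u_t\}\right)$ are unchanged.

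The main work, and the step I expect to be the obstacle, is bounding $C$, where the particular form of $\mu$ is essential. I would show that $c\ge\mu$ makes both candidates inside the maximum of $C$ at most $2\sqrt{2}X\sqrt{c}$. From $c\ge\tfrac{9}{8}X^2$, i.e.\ $9X^2\le 8c$, we get $X^4\le 4X^2c$ so that $\sqrt{X^4+4X^2c}\le\sqrt{8X^2c}=2\sqrt{2}X\sqrt{c}$, and the same inequality $9X^2\le 8c$ gives $3X^2\le 2\sqrt{2}X\sqrt{c}$; hence $(3X^2+\sqrt{X^4+4X^2c})/2\le 2\sqrt{2}X\sqrt{c}$. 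The second branch $c\ge(b+X^2)^2/(8X^2)$ is literally $b+X^2\le\sqrt{8X^2c}=2\sqrt{2}X\sqrt{c}$. Thus $\max\{\cdots\}\le 2\sqrt{2}X\sqrt{c}$ and $C=c^{-1}d\,\max\{\cdots\}\le 2\sqrt{2}dX\,c^{-1/2}$. Inserting $c^{-1/2}=\left(\sqrt{2}TdX/V_m\right)^{-1/3}$ and collecting powers of two, using $2\sqrt{2}\cdot(\sqrt{2})^{-1/3}=2^{4/3}$ and $dX\cdot(dX)^{-1/3}=(dX)^{2/3}$, collapses this to $C=\left(4dX\right)^{2/3}T^{-1/3}V_m^{1/3}$, matching the claim.

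The delicate point is precisely that the single threshold $\mu=\max\{9/8\,X^2,(b+X^2)^2/(8X^2)\}$ is engineered so that both branches of the maximum in $C$ are controlled by the common target $2\sqrt{2}X\sqrt{c}$; once this is verified the rest is bookkeeping of fractional exponents. Substituting the resulting $A,B,C,D$ back into \eqref{new_bound} then yields the stated mistake bound for LASEC.
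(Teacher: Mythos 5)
Your proof is correct and follows essentially the same route as the paper: substitute the schedule $c=\left(\sqrt{2}TdX/V_m\right)^{2/3}$ with $b=\varepsilon c$ into the previously identified $A,B,C,D$ and control the maximum in $C$ by $2\sqrt{2}X\sqrt{c}$. The only difference is cosmetic: the paper simply cites \cite{MoroshkoCr13} for the identity $\max\left\{ \left(3X^{2}+\sqrt{X^{4}+4X^{2}c}\right)/2,\,b+X^{2}\right\} =2X\sqrt{2c}$, whereas you re-derive the (sufficient) inequality directly from $c\geq\mu$ via the two branches of $\mu$ --- a self-contained verification of exactly the step the paper outsources.
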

\begin{proof}
As was shown~\cite{MoroshkoCr13} we have
\[
\max\left\{ \frac{3X^{2}+\sqrt{X^{4}+4X^{2}c}}{2},b+X^{2}\right\} =2X\sqrt{2c}~,
\]
 and thus
\begin{align*}
A&=b\left\Vert u_{1}\right\Vert ^{2}+cV_m+\sum_{k=1}^{m}\left(u_{k}^{\top}x_{k}\right)^{2}\\
&=b\left\Vert u_{1}\right\Vert ^{2}+\left(\sqrt{2}dX\right)^{2/3}T^{2/3}V_{m}^{1/3}+\sum_{k=1}^{m}\left(u_{k}^{\top}x_{k}\right)^{2}~,\\
B&=\ln\left|\frac{1}{b}D_{m}\right|+c^{-1}\tr\left(D_{0}\right)=\ln\left|\frac{1}{b}D_{m}\right|+\frac{\varepsilon}{1-\varepsilon}d~,\\
C&=\frac{2\sqrt{2}dX}{\sqrt{c}}=\frac{2\sqrt{2}dX}{\left(\frac{\sqrt{2}TdX}{V_{m}}\right)^{1/3}}=\left(4dX\right)^{2/3}T^{-1/3}V_{m}^{1/3}~.
\end{align*}
\QED
\end{proof}
The last bound for the LASEC algorithm is difficult to interpret. Roughly speaking, the number of mistakes grows with the amount of drift as ${\sim}T^{1/3}V_{m}^{2/3}$, because $A{\sim}T^{2/3}V_{m}^{1/3}$, $C{\sim}T^{-1/3}V_{m}^{1/3}$ and the bound is ${\sim}AC$.  Another bound for the drifting setting was shown by Cavallanti et al for the Shifting Perceptron~\cite{CavallantiCG07}. However, they used other notation of drift, which uses the norm rather than the square norm of the difference of comparison vectors, as we do. Thus, the two bounds are not comparable in general.

Next, we move to get explicit mistake bound for the LASEC-SS algorithm, by bounding the right term in \thmref{thm:lasec_ss_bound}.
Again, using Lemma 5 and Lemma 7 from \cite{MoroshkoCr13} we get,
\begin{align*}
&\sum_{t=1}^{T}M_{t}Z_{t}x_{t}^{\top}D_{t}^{-1}x_{t}  \leq  \ln\left|\frac{1}{b}D_{T}\right|+c^{-1}\sum_{t=1}^{T}\tr\left(D_{t-1}\right) \\
& \leq  \ln\left|\frac{1}{b}D_{T}\right|+c^{-1}\tr\left(D_{0}\right) \\
& +Tc^{-1}d\max\left\{ \left(3X^{2}+\sqrt{X^{4}+4X^{2}c}\right)/2,b+X^{2}\right\}~.
\end{align*}
Combining this bound with \thmref{thm:lasec_ss_bound} we get,
\begin{align*}
&\mathbb{E}\left[\sum_{t=1}^{T}M_{t}\right]  \leq \frac{1}{\gamma}\bar{L}_{\gamma,T}\left(\{u_{t}\}\right) \\
&+\frac{a}{2\gamma^{2}}\left(b\left\Vert u_{1}\right\Vert ^{2}+cV_{m}+\mathbb{E}\left[\sum_{t=1}^{T}M_{t}Z_{t}\left(u_{t}^{\top}x_{t}\right)^{2}\right]\right) \\
&  +\frac{1}{2a}\Bigg(\mathbb{E}\ln\left|\frac{1}{b}D_{T}\right|+c^{-1}\tr\left(D_{0}\right) \\
&+Tc^{-1}d\max\left\{ \left(3X^{2}+\sqrt{X^{4}+4X^{2}c}\right)/2,b+X^{2}\right\} \Bigg).
\end{align*}
We now state the main result of this section, bounding the expect number of mistakes of the LASEC-SS algorithm. This is an immediate application of corollary 8 from \cite{MoroshkoCr13}.
\begin{corollary}
Assume $\left\Vert x_{t}\right\Vert ^{2}\leq X^{2}$ and
set $b=\varepsilon c$ for some $0<\varepsilon<1$. Denote $\mu=\max\left\{ 9/8X^{2},\frac{\left(b+X^{2}\right)^{2}}{8X^{2}}\right\}$. Assume the LASEC-SS algorithm is run on $T$ examples. If
$V_{m}\leq T\frac{\sqrt{2}dX}{\mu^{3/2}}$ then by setting $c=\left(\frac{\sqrt{2}TdX}{V_{m}}\right)^{2/3}$
we get
\begin{align*}
&\mathbb{E}\left[\sum_{t=1}^{T}M_{t}\right]  \leq \frac{1}{\gamma}\bar{L}_{\gamma,T}\left(\{u_{t}\}\right)+b\frac{a}{2\gamma^{2}}\left\Vert u_{1}\right\Vert ^{2}\\
&+\frac{a}{2\gamma^{2}}\left(\sqrt{2}dX\right)^{2/3}T^{2/3}V_{m}^{1/3}\\
&+\frac{a}{2\gamma^{2}}\mathbb{E}\left[\sum_{t=1}^{T}M_{t}Z_{t}\left(u_{t}^{\top}x_{t}\right)^{2}\right]\\
&+\frac{1}{2a}\left(\mathbb{E}\ln\left|\frac{1}{b}D_{T}\right|+\frac{\varepsilon}{1-\varepsilon}d+\left(4dX\right)^{2/3}T^{2/3}V_{m}^{1/3}\right)~.
\end{align*}
\end{corollary}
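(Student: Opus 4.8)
The plan is to treat this corollary as a direct specialization of the intermediate bound displayed immediately above its statement --- the one obtained by combining \thmref{thm:lasec_ss_bound} with the Lemma~5 and Lemma~7 estimates of \cite{MoroshkoCr13} for $\sum_{t}M_{t}Z_{t}x_{t}^{\top}D_{t}^{-1}x_{t}$. No new inequality is required; the entire content is substituting the prescribed value of $c$ and simplifying the resulting expression term by term. In particular, unlike \corref{cor1} for the supervised algorithm, there is no need to invoke the technical \lemref{lem:lemma1}, because the LASEC-SS bound is already additive in $m$ (through $V_m$) rather than of the self-referential form $m\leq \frac{1}{\gamma}D+\frac{1}{\gamma}\sqrt{A(B+mC)}$.

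First I would record the algebraic identity already exploited in the proof of \corref{cor1}: under the hypotheses $b=\varepsilon c$ and $V_{m}\leq T\sqrt{2}dX/\mu^{3/2}$, the choice $c=\paren{\sqrt{2}TdX/V_{m}}^{2/3}$ forces
\[
\max\braces{\tfrac{1}{2}\paren{3X^{2}+\sqrt{X^{4}+4X^{2}c}},\,b+X^{2}}=2X\sqrt{2c}~.
\]
This is exactly corollary~8 of \cite{MoroshkoCr13}, and it replaces the unwieldy maximum sitting inside the $\frac{1}{2a}$ group by a clean half-power of $c$.

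Next I would dispatch the four groups of terms. The term $b\frac{a}{2\gamma^{2}}\left\Vert u_{1}\right\Vert^{2}$ and the two expectation terms $\mathbb{E}\brackets{\sum_{t}M_{t}Z_{t}\paren{u_{t}^{\top}x_{t}}^{2}}$ and $\mathbb{E}\ln\abs{\tfrac{1}{b}D_{T}}$ pass through untouched. For the drift term I would use $cV_{m}=\paren{\sqrt{2}TdX}^{2/3}V_{m}^{-2/3}\cdot V_{m}=\paren{\sqrt{2}dX}^{2/3}T^{2/3}V_{m}^{1/3}$. For the initialization term I would substitute $D_{0}=(bc)/(c-b)\,I$ with $b=\varepsilon c$, which gives $\tr\paren{D_{0}}=d\,\varepsilon c/(1-\varepsilon)$ and hence $c^{-1}\tr\paren{D_{0}}=\frac{\varepsilon}{1-\varepsilon}d$.

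The only genuinely delicate step --- and thus the main obstacle --- is the final $Tc^{-1}d\cdot 2X\sqrt{2c}$ term. I would rewrite it as $2\sqrt{2}\,XdT\,c^{-1/2}$, substitute $c^{-1/2}=\paren{V_{m}/(\sqrt{2}TdX)}^{1/3}$, and carefully collect the exponents: the factors $XdT$ combine to power $2/3$, leaving $V_{m}^{1/3}$, while the powers of $2$ give $2^{3/2}/2^{1/6}=2^{4/3}$. Recognizing $2^{4/3}=4^{2/3}$ then yields precisely $\paren{4dX}^{2/3}T^{2/3}V_{m}^{1/3}$, matching the claimed form. Reassembling the four simplified groups reproduces the stated bound, completing the argument.
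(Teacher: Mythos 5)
Your proposal is correct and matches the paper's own route exactly: the paper likewise treats this corollary as an immediate application of corollary~8 of \cite{MoroshkoCr13}, substituting $c=\paren{\sqrt{2}TdX/V_{m}}^{2/3}$ with $\max\braces{\tfrac{1}{2}\paren{3X^{2}+\sqrt{X^{4}+4X^{2}c}},b+X^{2}}=2X\sqrt{2c}$ into the displayed intermediate bound, and your term-by-term simplifications (including $c^{-1}\tr\paren{D_{0}}=\frac{\varepsilon}{1-\varepsilon}d$ and the exponent bookkeeping $2^{3/2}\cdot 2^{-1/6}=2^{4/3}=4^{2/3}$ giving $\paren{4dX}^{2/3}T^{2/3}V_{m}^{1/3}$) all check out. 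You are also right that \lemref{lem:lemma1} is not needed here, since the selective-sampling bound is not of the self-referential form solved by that lemma.
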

Again, we can optimize the last bound for the algorithm's parameter $a$. Setting
\begin{equation*}
a=\gamma\sqrt{
\frac{\mathbb{E}\ln\left|\frac{1}{b}D_{T}\right|+\frac{\varepsilon}{1-\varepsilon}d+\left(4dX\right)^{2/3}T^{2/3}V_{m}^{1/3}}
{\parbox{5cm}{$\Big(b\left\Vert u_{1}\right\Vert ^{2}+\left(\sqrt{2}dX\right)^{2/3}T^{2/3}V_{m}^{1/3}$\\
\hspace*{1cm}$+\mathbb{E}\left[\sum_{t=1}^{T}M_{t}Z_{t}\left(u_{t}^{\top}x_{t}\right)^{2}\right]\Big)$}}
}
\end{equation*}
we obtain
\begin{align*}
&\mathbb{E}\left[\sum_{t=1}^{T}M_{t}\right]  \leq  \frac{1}{\gamma}\bar{L}_{\gamma,T}\left(\{u_{t}\}\right)\\
&  +\frac{1}{\gamma}\Bigg[\bigg(b\left\Vert u_{1}\right\Vert ^{2}+\left(\sqrt{2}dX\right)^{2/3}T^{2/3}V_{m}^{1/3}\\
& ~~~~ +\mathbb{E}\left[\sum_{t=1}^{T}M_{t}Z_{t}\left(u_{t}^{\top}x_{t}\right)^{2}\right]\bigg)
\bigg(\mathbb{E}\ln\left|\frac{1}{b}D_{T}\right|+\frac{\varepsilon}{1-\varepsilon}d\\
& ~~~~ +\left(4dX\right)^{2/3}T^{2/3}V_{m}^{1/3}\bigg)\Bigg]^{1/2}~.
\end{align*}

\section{Experimental Study}
\begin{figure*}[t!]
\subfigure[\label{fig:sims_full}]{\includegraphics[width=0.33\textwidth]{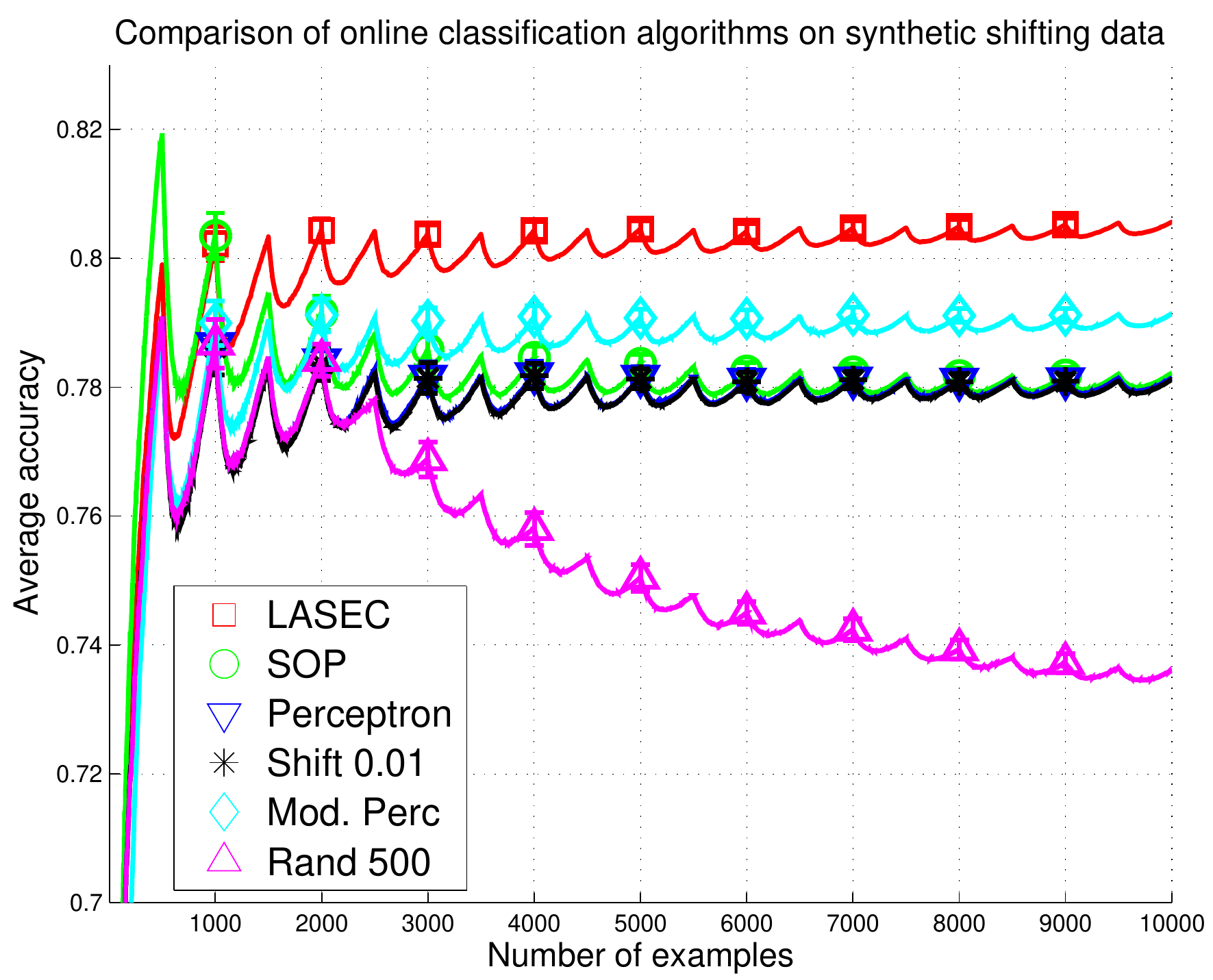}}
\subfigure[\label{fig:sims_ss_single_1}]{\includegraphics[width=0.33\textwidth]{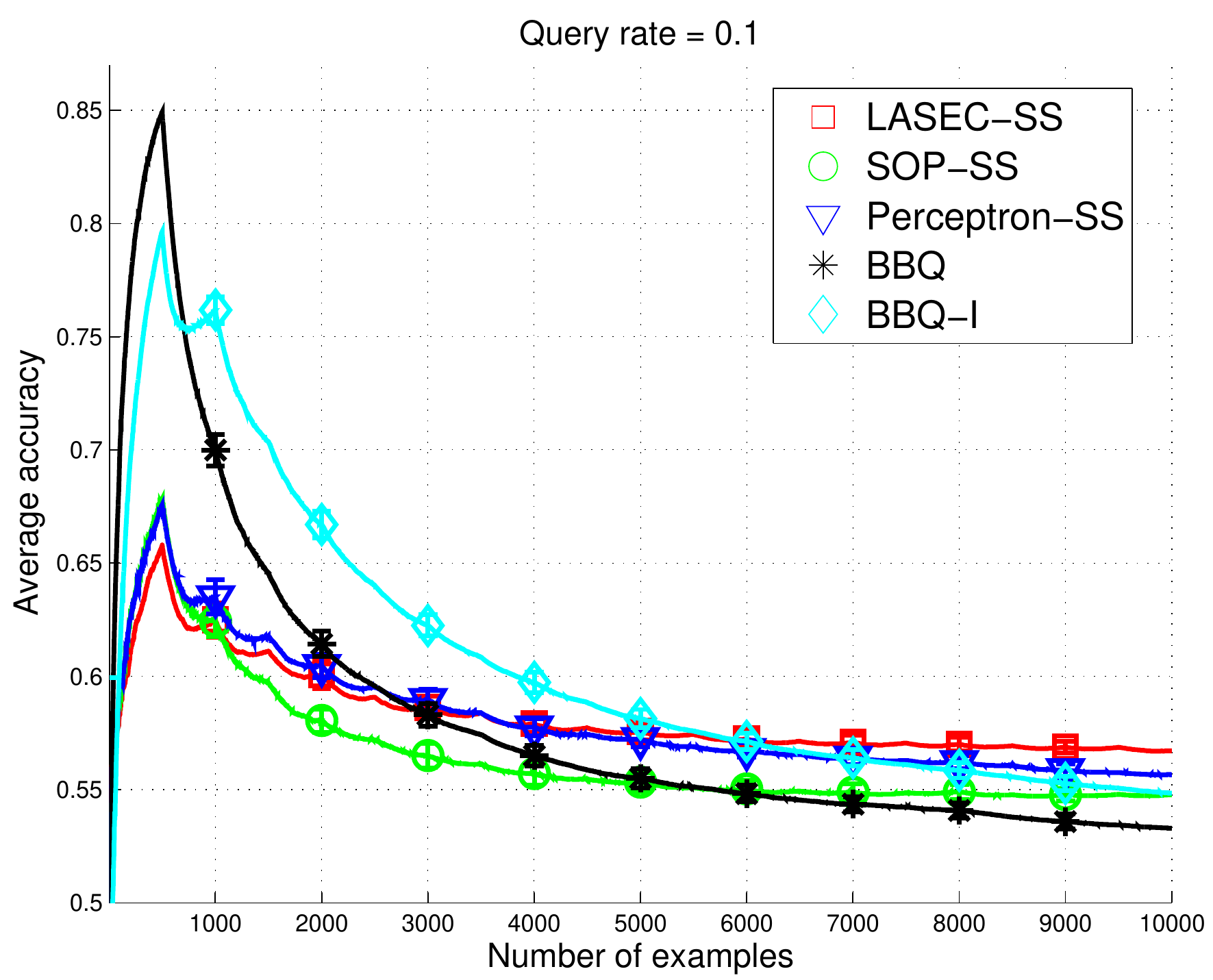}}
\subfigure[\label{fig:sims_ss_all}]{\includegraphics[width=0.33\textwidth]{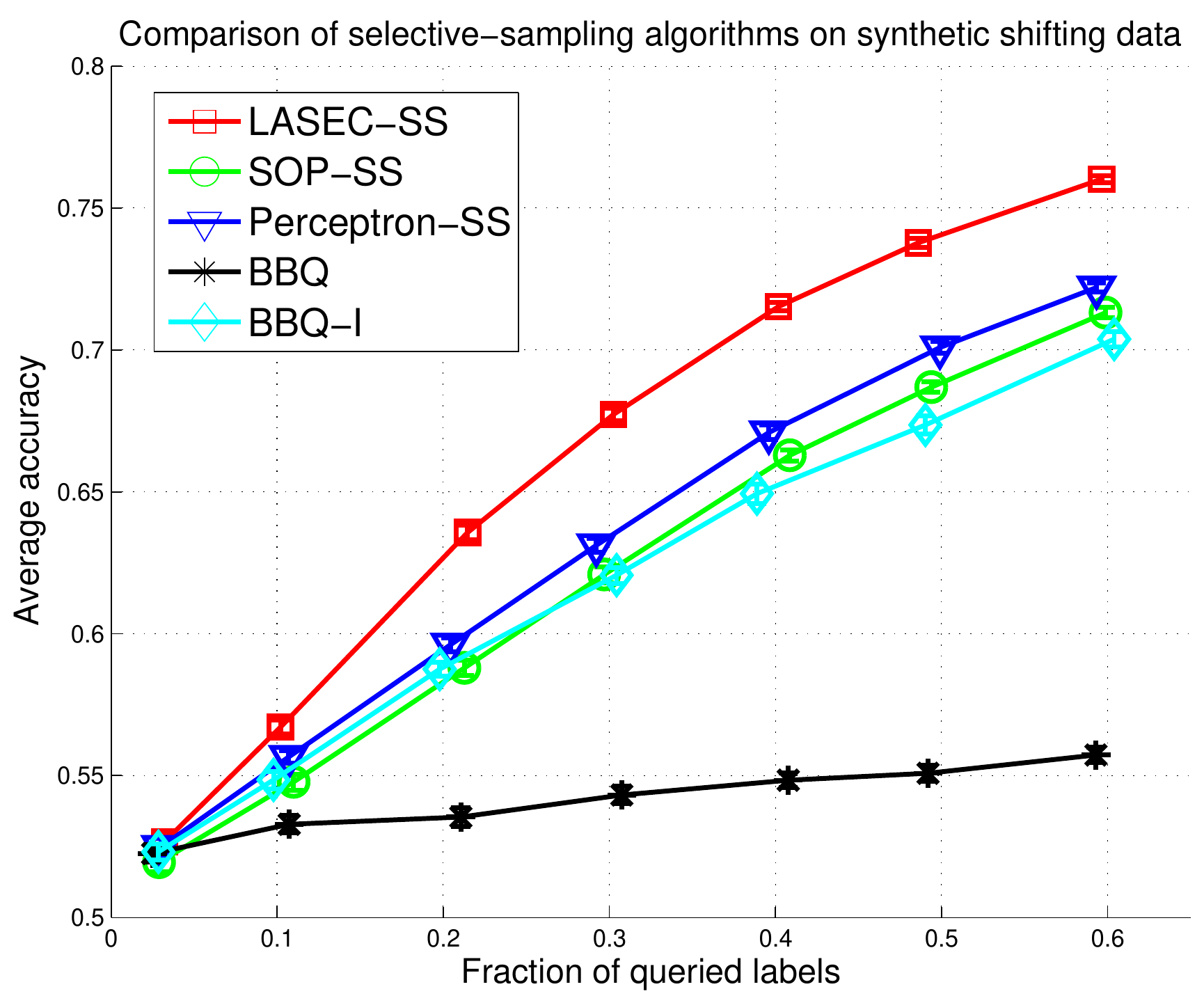}}
\subfigure[\label{fig:sims_full_real}]{\includegraphics[width=0.33\textwidth]{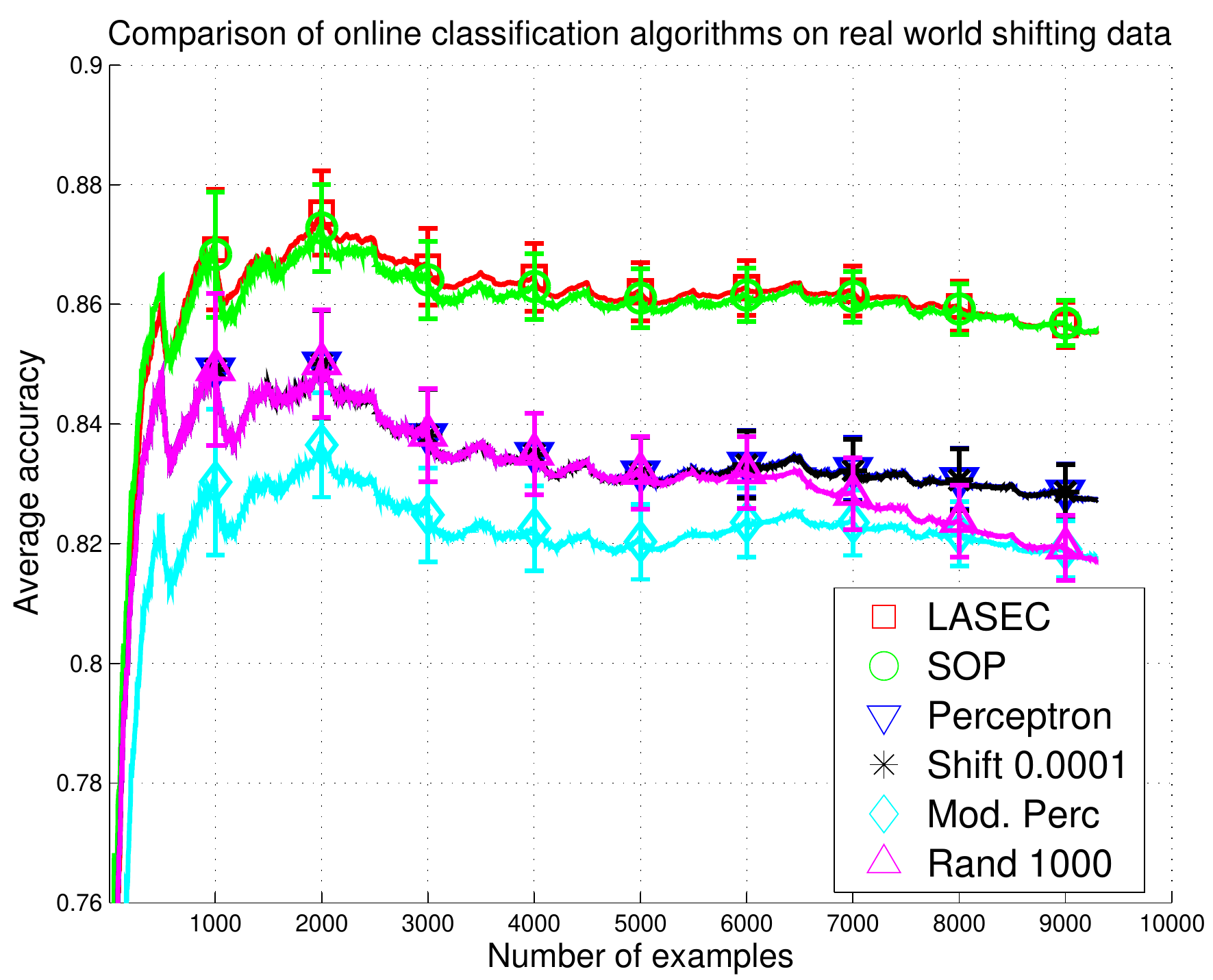}}
\subfigure[\label{fig:sims_ss_single_4}]{\includegraphics[width=0.33\textwidth]{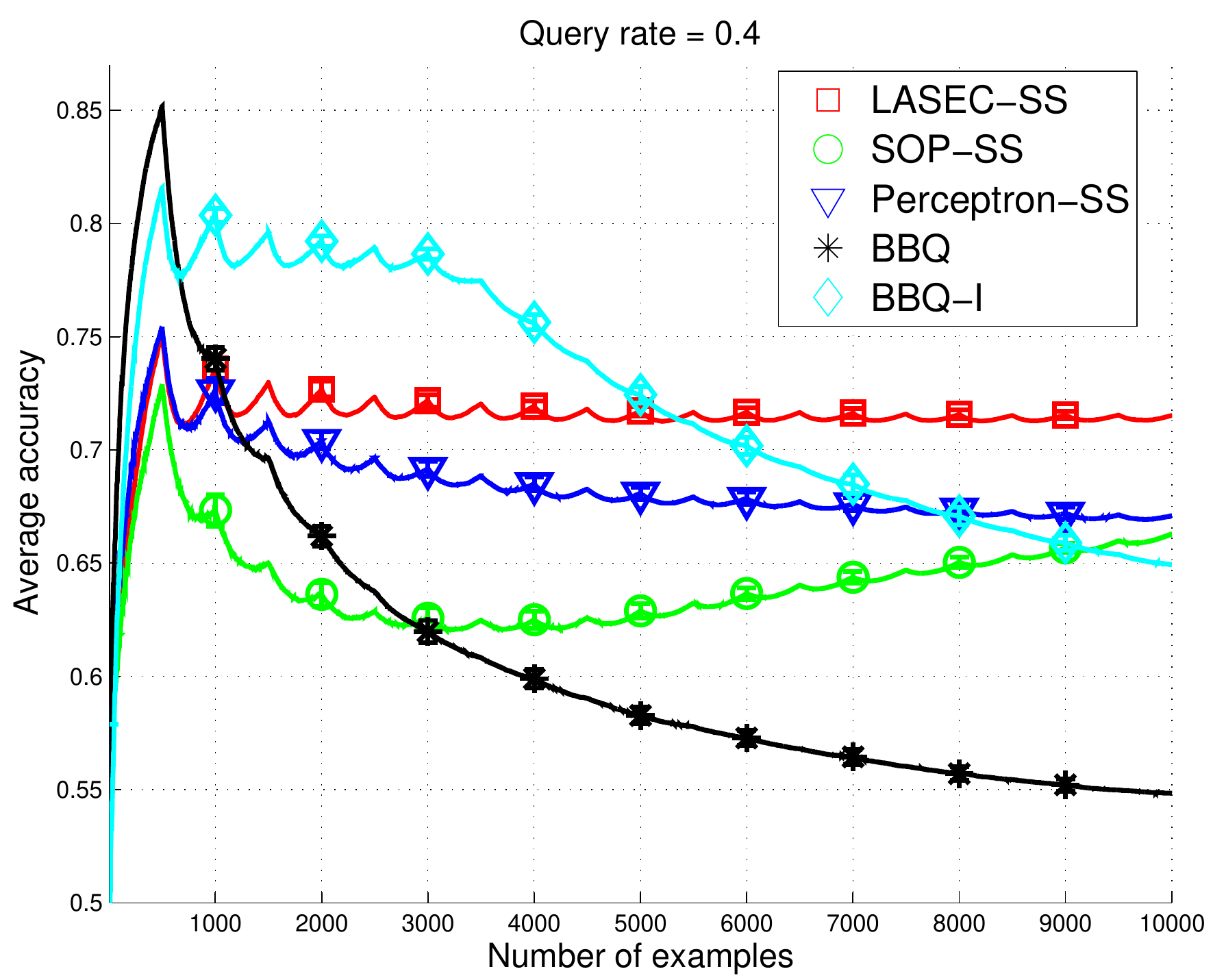}}
\subfigure[\label{fig:sims_ss_all_real}]{\includegraphics[width=0.33\textwidth]{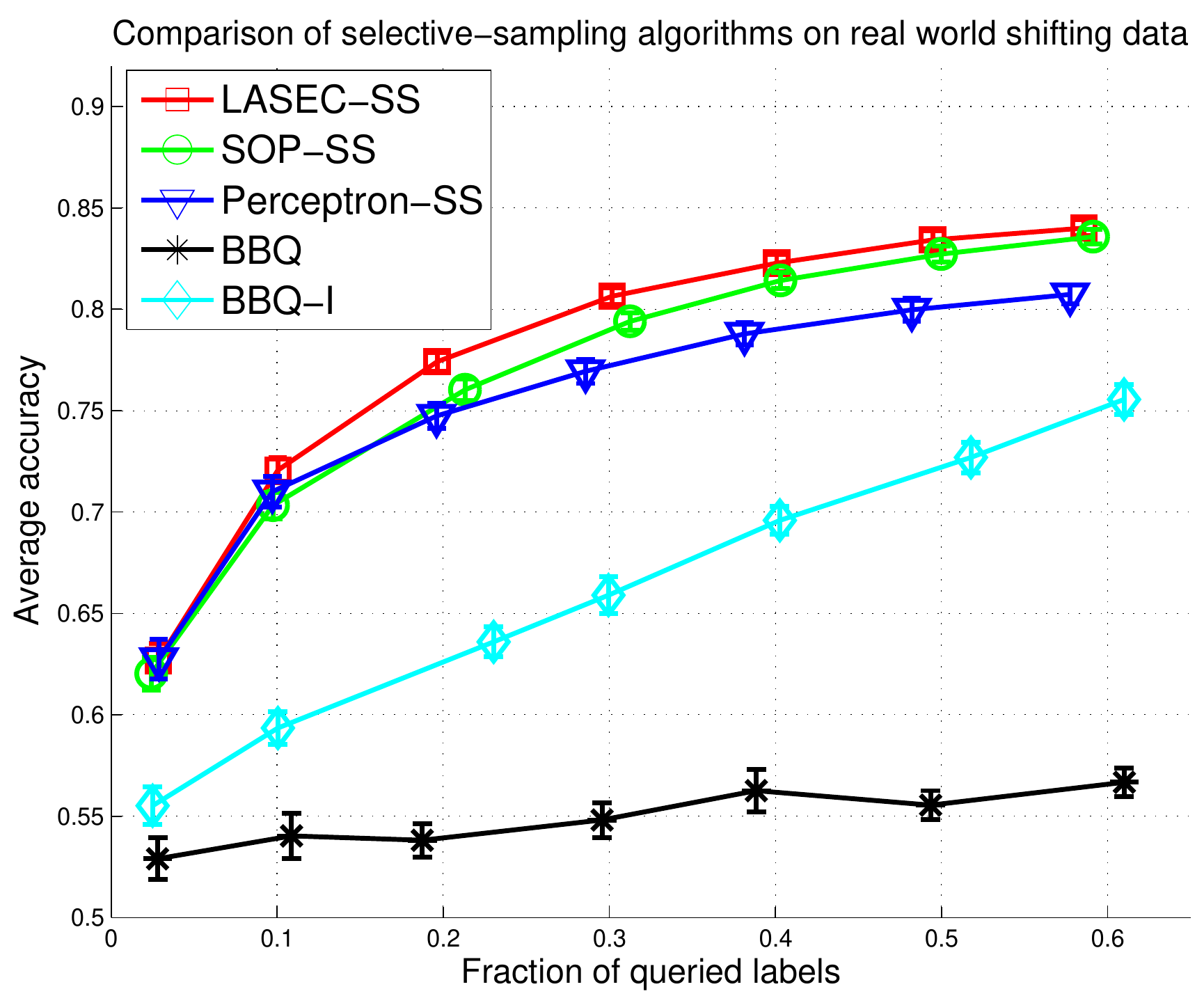}}
\caption{Left: accuracy against number of examples for binary fully supervised classification algorithms on (a) synthetic shifting dataset, and (d) USPS shifting dataset.
Middle: accuracy against number of examples with query rate (b) $\mathord{\sim}0.1$, and (e) $\mathord{\sim}0.4$.
Right: accuracy against fraction of queried labels for selective sampling algorithms on (c) synthetic shifting dataset, and (f) USPS shifting dataset.}
\vspace{-0.2cm}
\end{figure*}
%

We evaluated our algorithm with both synthetic and real-world data with shifts, by comparing the average accuracy (total number of correct online classifications divided
by the number of examples) of LASEC and LASEC-SS.
\paragraph{Data: }
In our first experiment we use a synthetic dataset with $10,000$ examples of dimension $d=50$. The inputs $x_t\in\reals^{50}$ were drawn from a zero-mean unit-covariance Gaussian distribution. The target $u_t\in\reals^{50}$ is a zero-mean unit-covariance Gaussian vector, which is switched every 500 examples to some other random vector. That is $u_1=...=u_{500},~u_{501}=...=u_{1000},~...$. The labels are set according to $y_t=\sign(x_t^{\top}u_t)$.
Our second experiment uses the US Postal Service handwritten digits recognition corpus (USPS)~\cite{Hull:1994:DHT:628312.628607}. It contains normalized
grey scale images of size $16{\times} 16$, divided into a training (test) set of $7,291$ ($2,007$) images. We combined the sets to get $9,298$ examples.  Based on the USPS multiclass data we generated binary data with shifts. We chose at random some digits to be positive class (the other digits are negative class). The partition to positive and negative classes is changed every 500 examples at random, that is every 500 samples we changed the subgroup of labels (out of 10) that are labeled as +1 (labels in the complementary subgroup are labeled as -1).
Each set of experiments was repeated $50$ times and the error bars in the plots correspond to the $95\%$ confidence interval over the $50$ runs.
\paragraph{Supervised Online Learning with Drift: }
In the supervised-online classification task we compared the performance of LASEC (setting $a=\infty$ in \figref{algorithm:ss_lasec}) to five other algorithms: the second-order perceptron algorithm (SOP) \cite{CesaBianchiCoGe05}, the Perceptron algorithm \cite{Rosenblatt58}, the Shifting Perceptron algorithm \cite{CavallantiCG07}, the Modified Perceptron algorithm \cite{DBLP:journals/algorithmica/BlumFKV98} and the Randomized Budget Perceptron algorithm \cite{CavallantiCG07}.

Both the Shifting Perceptron and the Randomized Budget Perceptron are tuned using a single
parameter (denoted by $\lambda$ and B respectively). Since the optimal values of these parameters simply reduced these algorithms to the original Perceptron, we set $\lambda=0.01$ and $B=500$ for synthetic data, and $\lambda=0.0001$ and $B=1,000$ for real-world data. The setting of $B=500$ is actually the switching window, while for real-world data we alleviated the Randomized Budget Perceptron and used twice the switching window as the budget.
For the LASEC and SOP algorithms the parameters were tuned using a random draw of the data.

The results comparing supervised classification algorithms on synthetic data are shown in \figref{fig:sims_full}. While for $t<500$ (before the first shift) the SOP algorithm is the best as expected, we see that the LASEC algorithm deals better with the shifts and outperforms other algorithms.
For the real-world USPS dataset (see \figref{fig:sims_full_real})
LASEC slightly outperforms SOP, and both outperform other
perceptron-like algorithms, due to the usage of second-order
information.

\paragraph{Selective Sampling Online Learning with Drift: }
For the selective sampling task we compared the LASEC-SS algorithm from \figref{algorithm:ss_lasec} to several selective sampling algorithms: the selective sampling version of second-order perceptron algorithm (SOP-SS) \cite{Cesa-BianchiGZ06a}, the selective sampling version of perceptron algorithm (Perceptron-SS) \cite{Cesa-BianchiGZ06a} and the BBQ algorithm \cite{DBLP:conf/icml/Cesa-BianchiGO09}. In addition to the BBQ algorithm~\cite{DBLP:conf/icml/Cesa-BianchiGO09}, we also consider a variant of the BBQ algorithm, which we call BBQ-I. This algorithm is similar to the original BBQ algorithm but it performs updates only when the queried label is different from the predicted label.
Each algorithm has one parameter that controls the tradeoff between the query rate (fraction of queried labels) and the accuracy of the algorithm. 
For fairness, this parameter was set to get about the same query rate for all algorithms.

\figref{fig:sims_ss_single_1} and \figref{fig:sims_ss_single_4} summarize the accuracy of the algorithms on synthetic data for query rate $\mathord{\sim}0.1$ and $\mathord{\sim}0.4$. In both cases LASEC-SS outperforms other algorithms. Before the first shift at round 500 the BBQ is the best as expected from previous results \cite{DBLP:conf/icml/OrabonaC11}, but its performance significantly degrade after the first shift. This is because this algorithm performs query when the quantity $r_t=x_t^{\top}A_t^{-1}x_t$ is large enough (see \cite{DBLP:conf/icml/Cesa-BianchiGO09}), while the matrix $A_t$ grows each time a label is queried. Large $A_t$ makes $r_t$ small, the algorithm converges and stops query labels. If after that a switch occurs, the algorithm fails in predictions but cannot query correct labels because $r_t$ is small. This causes a significant degradation in the prediction accuracy. On the other hand, the BBQ-I algorithm performs less updates (as it performs updates only when a mistake occurs), and thus the algorithm converges much slower. This makes it simpler to adapt to changing environment, after a switch occurs. We note that for stationary environment (when $u_t=u~\forall t$), BBQ outperforms BBQ-I, as well as other selective sampling algorithms (see \cite{DBLP:conf/icml/OrabonaC11}). For low query rate as in \figref{fig:sims_ss_single_1}, all algorithms hardly deal with the shifts, as expected. However, our algorithm still converges on a better average accuracy. For higher rate as in \figref{fig:sims_ss_single_4}, our algorithm deals well with the shifts and the average accuracy does not decrease. This is in contrary to other algorithms. For the SOP-SS algorithm we see in \figref{fig:sims_ss_single_4} that the performance increase over time after an initial drop. This is because the SOP algorithm tends to converge 
fast and then it acts as no labels are needed, because the margin is large. After a data shift, the algorithm experiences a drop because no labels are sampled. After some time the algorithm detects that labels are needed (because the margin is small) and performance increase.

\figref{fig:sims_ss_all} and \figref{fig:sims_ss_all_real} show the tradeoff between average accuracy and fraction of queried labels on synthetic and real-world (USPS) data accordingly. Evidently, LASEC-SS is the best selective sampling algorithm in the drifting setting. In addition, we can see that unlike the stationary setting where it was shown \cite{Cesa-BianchiGZ06a} that a small fraction of labels are enough to get the accuracy of a fully supervised setting, in the drifting case much more labels are needed. This is because old queried labels cannot contribute to form a good predictor due to a drift in the model, and the algorithm must query more labels to have a good prediction accuracy. Yet, our algorithm can employ half of the labels to get performance not too far from the full information case.




\vspace{-0.2cm}
\section{Conclusions}
\vspace{-0.2cm}
We proposed a novel second-order algorithm for binary classification designed to work in non-stationary (drifting) selective sampling setting. Our algorithm is based on the last-step min-max approach, and we showed how to solve the last-step min-max optimization problem directly for classification using the square loss. 
To the best of our knowledge, this is the first algorithm designed to work in the selective sampling setting when there is a drift. We proved mistake bound for the algorithm in the fully supervised setting, and a bound for the expected number of mistakes for the selective sampling version of the algorithm. Experimental study shows that our algorithm outperforms other algorithms, in the supervised and selecting sampling settings.
%
For the algorithm to perform well, the amount of drift $V$ or a bound over it should be known in advance. An interesting direction is to design algorithms that automatically detect the level of drift, or are invariant to it.

\paragraph{Acknowledgements:} This research was funded in part by the Intel Collaborative Research Institute for Computational Intelligence (ICRI-CI) and in part by an Israeli Science Foundation grant ISF- 1567/10.

{
\bibliographystyle{plain}
\bibliography{bib}
}
\newpage
\appendix
\section{SUPPLEMENTARY MATERIAL}
\label{sec:supp_material}


\subsection{Proof of \lemref{lem:lemma1}}
\label{proof_lemma1}
\begin{proof}
We follow equivalency of the following inequalities,
\begin{align*}
&m-\frac{1}{\gamma}D\leq\frac{1}{\gamma}\sqrt{A\left(B+mC\right)}\\
&m^{2}-\frac{2}{\gamma}mD+\frac{1}{\gamma^{2}}D^{2}\leq\frac{1}{\gamma^{2}}A\left(B+mC\right)\\
&m^{2}-\left(\frac{2}{\gamma}D+\frac{1}{\gamma^{2}}AC\right)m+\frac{1}{\gamma^{2}}D^{2}-\frac{1}{\gamma^{2}}AB\leq0\\
&
m\leq\Bigg(\frac{2}{\gamma}D+\frac{1}{\gamma^{2}}AC\\
&~~~+\sqrt{\left(\frac{2}{\gamma}D+\frac{1}{\gamma^{2}}AC\right)^{2}-4\left(\frac{1}{\gamma^{2}}D^{2}-\frac{1}{\gamma^{2}}AB\right)}\Bigg)/2\\
&=\frac{1}{\gamma}D+\frac{1}{2\gamma^{2}}AC\\
&~~~+\frac{1}{\gamma}\sqrt{\frac{1}{\gamma}DAC+\frac{1}{4\gamma^{2}}\left(AC\right)^{2}+AB}~.
\end{align*}
\QED
\end{proof}


\end{document}